\documentclass[letterpaper]{article}

\PassOptionsToPackage{numbers}{natbib}
\usepackage[preprint]{style}

\usepackage[utf8]{inputenc} % allow utf-8 input
\usepackage[T1]{fontenc}    % use 8-bit T1 fonts
\usepackage{hyperref}       % hyperlinks
\usepackage{url}            % simple URL typesetting
\usepackage{booktabs}       % professional-quality tables
\usepackage{amsfonts}       % blackboard math symbols
\usepackage{nicefrac}       % compact symbols for 1/2, etc.
\usepackage{microtype}      % microtypography
\usepackage{xcolor}         % colors

\usepackage{wrapfig}
\usepackage{amsmath,amssymb}
\usepackage{algorithm}
\usepackage{graphicx}
\usepackage{algpseudocode}
\usepackage{amsthm}
\usepackage{verbatim}
\usepackage{mathtools}
\usepackage{multirow}
\usepackage{subfigure}
\usepackage{empheq}
\usepackage{enumitem}
\usepackage{cases}
\usepackage{array}
\usepackage{accents}
\usepackage{glossaries}

\graphicspath{{../figures}}

\newtheorem{theorem}{Theorem}

\newtheorem{lemma}{Lemma}
\newtheorem{definition}{Definition}
\newtheorem{remark}{Remark}

\newcommand{\rom}[1]{\uppercase\expandafter{\romannumeral #1\relax}}

\newcommand{\fig}{Fig.}
\newcommand{\tab}{Table}
\newcommand{\secR}{Section}
\newcommand{\prop}{Proposition}
\newcommand{\lem}{Lemma}
\newcommand{\theo}{Theorem}

\newcommand{\nl}{\nonumber \\}
\newcommand{\todo}[1]{\color{red}#1\color{black}}

\newcommand{\MultiInstMetric}{\gls{DAP}}

\newcommand{\maptracker}{MapTracker}
\newcommand{\streamy}{StreamMapNet}
\newcommand{\maptrv}{MapTRv2}
\newcommand{\meanLoc}{mLoc.}
\newcommand{\meanDet}{mDet.}

\newcommand{\Loc}{Loc.}
\newcommand{\Det}{Det.}

\newacronym{AP}{AP}{average precision}

\newacronym{FP}{FP}{false positive}
\newacronym{FN}{FN}{false negative}

\newacronym{FD}{FD}{Fr\'{e}chet distance}

\newacronym{CD}{CD}{Chamfer distance}

\newacronym{DP}{DP}{dynamic programming}

\newacronym{GED}{GED}{generalized edit distance}
\newacronym{GOSPA}{GOSPA}{generalized optimal sub-pattern assignment}

\newacronym{HD}{HD}{high-definition}

\newacronym{LIS}{LIS}{longest increasing subsequence}
\newacronym{mAP}{mAP}{mean average precision}
\newacronym{MB}{MB}{multi-Bernoulli}
\newacronym{MOT}{MOT}{multi-object tracking}
\newacronym{MVN}{MVN}{multivariate normal distribution}

\newacronym{NW}{NW}{Needleman--Wunsch}
\newacronym{WF}{WF}{Wagner--Fischer}

\newacronym{OME}{OME}{online map estimation}

\newacronym{O-GOSPA}{O-GOSPA}{ordered-generalized optimal sub-pattern assignment}
\newacronym{OM}{OM}{online mapping}
\newacronym{OSPA}{OSPA}{optimal sub-pattern assignment}

\newacronym{P-GOSPA}{P-GOSPA}{probablistic-GOSPA}

\newacronym{RFS}{RFS}{random finite set}

\newacronym{SOSPA}{SOSPA}{sequence optimal sub-pattern assignment}
\newacronym{SOTA}{SOTA}{state-of-the-art}

\newacronym{TP}{TP}{true positive}
\newacronym{DAP}{PLD}{polyline localisation and detection}

\newacronym{mDAP}{mPLD}{mean-PLD}

\newcommand{\x}{\mathbf{x}} 
\newcommand{\y}{\mathbf{y}}
\newcommand{\z}{\mathbf{z}}

\newcommand{\X}{\mathbf{X}} 
\newcommand{\Y}{\mathbf{Y}}
\newcommand{\Z}{\mathbf{Z}}

\newcommand{\nx}{|\x|} 
\newcommand{\ny}{|\y|}
\newcommand{\nz}{|\z|}

\newcommand{\nX}{|\X|} 
\newcommand{\nY}{|\Y|}

\newcommand{\EnX}{R_{\X}} 
\newcommand{\EnY}{R_{\Y}}

\newcommand{\dogospa}{d^{(c,p)}_{\text{SOSPA}}}

\newcommand{\dNogospa}{\bar{d}^{(c,p)}_{\text{SOSPA}}}

\newcommand{\dcyclicogospa}{d^{(c,p)}_{\text{SOSPA}}}

\newcommand{\dpgospa}{d^{(c,p)}_{\text{PLD}}}
\newcommand{\Spgospa}{S^{(c,p)}_{\text{PLD}}}
\newcommand{\dNpgospa}{\bar{d}^{(c,p)}_{\text{PLD}}}

\newcommand{\dloc}{e^{(c,p)}_{\text{loc.}}}
\newcommand{\dNloc}{\bar{e}^{(c,p)}_{\text{loc.}}}

\newcommand{\dcard}{e_{\text{det.}}}
\newcommand{\dNcard}{\bar{e}_{\text{det.}}}

\newcommand{\dpgospaOne}{d^{(c,1)}_{\text{PLD}}}
\newcommand{\dNpgospaOne}{\bar{d}^{(c,1)}_{\text{PLD}}}

\newcommand{\dlocOne}{e^{(c,1)}_{\text{loc.}}}
\newcommand{\dNlocOne}{\bar{e}^{(c,1)}_{\text{loc.}}}

\newcommand{\thetaStar}{\theta^\star}

\newcommand{\Rsix}{\textrm{R}_{60}}
\newcommand{\Rhundred}{\textrm{R}_{100}}

\newcommand{\SetComposed}{\theta_{\textrm{c}}}

\newcommand{\SetUnordered}{\theta}

\newcommand{\AllAssignSetOrdered}{\Gamma^\textrm{o}}

\newcommand{\Setxz}{\SetUnordered_{\x,\z}}

\newcommand{\Setzy}{\SetUnordered_{\z,\y}}

\newcommand{\Jxz}{J_{\x,\z}}
\newcommand{\Jzy}{J_{\z, \y}}
\newcommand{\JComposed}{J_{\textrm{c}}}

\newcommand{\OpShiftSet}{\mathcal{S}}

\newcommand{\tauAP}{\tau}

\newcommand{\tauFD}{\tau} 

\glsdisablehyper

\author{%
  Chouaib~Bencheikh~Lehocine\textsuperscript{1} \quad
  Adam~Lilja\textsuperscript{1,2} \quad
  Junsheng~Fu\textsuperscript{1} \quad
  Lars~Hammarstrand\textsuperscript{2} \\
  \textsuperscript{1}Zenseact AB \quad \textsuperscript{2}Chalmers University of Technology\\
  Gothenburg, Sweden \\
    \texttt{\{firstname.lastname\}@\{zenseact.com, chalmers.se\}}
}

\begin{document}

\title{Beyond Chamfer Distance: Granular Order-aware Evaluation Metric For Online Mapping
%{%\footnotesize \textsuperscript{*}Note: Sub-titles are not captured in Xplore and
%should not be used}
%\thanks{The authors are with $^{1}$Zenseact AB and $^{2}$Chalmers University of Technology, Gothenburg, Sweden (e-mails: \{firstname.lastname\}@\{zenseact.com,chalmers.se\}).}
%\thanks{}
}

%\author{
%Chouaib~Bencheikh~Lehocine$^{1}$ \quad
%Adam Lilja$^{1,2}$ \quad
%Junsheng Fu$^{1}$ \quad
%Lars Hammarstrand$^{2}$ %\\
%$^{1}$Zenseact AB \\
%$^{2}$Chalmers University of Technology\\
%\{firstname.lastname\}@\{zenseact.com,chalmers.se\}
%}

%\author{Chouaib~Bencheikh~Lehocine, AUthor Two, Author Three, Author Four~\IEEEmembership{X,~IEEE}}
%\author{XXX
	%\IEEEauthorblockN{1\textsuperscript{st} Given Name Surname}
%\IEEEauthorblockA{\textit{dept. name of organization (of Aff.)} \\
%\textit{name of organization (of Aff.)}\\
%City, Country \\
%email address or ORCID}
%\and
%\IEEEauthorblockN{2\textsuperscript{nd} Given Name Surname}
%\IEEEauthorblockA{\textit{dept. name of organization (of Aff.)} \\
%\textit{name of organization (of Aff.)}\\
%City, Country \\
%email address or ORCID}
%\and
%\IEEEauthorblockN{3\textsuperscript{rd} Given Name Surname}
%\IEEEauthorblockA{\textit{dept. name of organization (of Aff.)} \\
%\textit{name of organization (of Aff.)}\\
%City, Country \\
%email address or ORCID}
%}

\maketitle

\begin{abstract}
%Online mapping methods have been advancing rapidly. Nevertheless, their evaluation remains largely dominated by \gls{mAP} and \gls{CD}. While the \gls{mAP}-\gls{CD} framework provides a representative approach for detection evaluation, it suffers from the limited granularity of \gls{CD} and its inability to capture misordered points within geometries. Moreover, the reliance on \gls{mAP} introduces hard decisions on \glspl{TP} and \glspl{FP}, which, although informative for detection performance, fail to reflect positional accuracy.
%In this work, we address these limitations by introducing \gls{SOSPA}. The proposed metric is an order-aware similarity metric for polylines that offers finer granularity than \gls{CD} while also satisfying all metric axioms. We further extend \gls{SOSPA} to closed polylines and establish a result that enables its efficient computation. Building on \gls{SOSPA}, we propose \gls{DAP}, a soft multi-instance \emph{metric} derived from \gls{P-GOSPA} that jointly quantifies detection and positional accuracy. %, with an inherent decomposition into localization and detection error components.
%Through evaluations on online mapping benchmarks, we show that \gls{DAP} preserves the performance trends indicated by \gls{AP} while revealing accuracy differences that \gls{AP} fails to capture, particularly across varying range settings.

Online map estimation is a crucial component of autonomous driving systems that reduces the reliance on costly high-definition maps. \Gls{SOTA} methods commonly predict map elements %in a lightweight, vectorized format 
as ordered sequences of points that form polylines and polygons.
%Despite rapid progress, 
The evaluation of these methods relies predominantly on \gls{mAP} based on thresholded \gls{CD}. This framework lacks sensitivity to point ordering and provides limited granularity in assessing geometric quality, making it difficult to distinguish which methods truly excel over others.
In this work, we address these limitations on two fronts. For the \emph{single-instance similarity measure}, we introduce \gls{SOSPA}, an order-aware \emph{metric} that enables fine-grained evaluation of individual geometries while satisfying all metric axioms. %To accommodate all types of road element geometries, we further extend \gls{SOSPA} to closed polylines. 
For the \emph{multi-instance evaluation framework}, we propose \gls{DAP}, a soft \emph{metric} that jointly captures detection quality and geometric accuracy, replacing the hard thresholding of \gls{mAP} with a principled soft assignment. Through evaluations on nuScenes, we demonstrate that \gls{DAP} effectively ranks \gls{SOTA} online mapping methods (\maptrv, \streamy, \maptracker) while providing a decomposed error analysis. 
This analysis identifies detection capability as the dominant bottleneck in current methods, revealing a performance trend that \gls{mAP} fails to capture. Code for evaluation using our metrics will be released.% upon acceptance.

\end{abstract}

%\begin{IEEEkeywords}
%Online Mapping, Polylines, Evaluation Metrics. %\gls{CD}, \gls{mAP}, GOSPA, GED
%\end{IEEEkeywords}

\glsresetall
\section{Introduction}

Autonomous vehicles require accurate and up-to-date information about the surrounding road geometry to plan safe trajectories. 
\Gls{OME} methods aim to eliminate the reliance on costly pre-built \gls{HD} maps by predicting map elements using on-board sensors in real time. 
The performance of downstream planning and control modules is, in turn, directly dependent on these maps.
Therefore, the evaluation metrics of \gls{OME} methods must reflect detection capability as well as geometric accuracy.
\Gls{SOTA} \gls{OME} methods~\cite{hdmapnet,VectorMapNet2023, maptr,maptrv2, streammapnet, maptracker} represent map elements such as lane dividers, road boundaries, and pedestrian crossings, as ordered sequences of points forming polylines and polygons, each associated with a confidence score and a semantic class label. 
Their evaluation  rests on two pillars: (i)~a \emph{similarity measure} between individual predicted and ground truth geometries, and (ii)~a \emph{multi-instance framework} that aggregates element-wise similarities into an overall score. 
The prevailing evaluation approach employs \gls{CD} for~(i) and \gls{AP} for~(ii), with \gls{mAP} representing the mean score across predicted classes~\cite{hdmapnet,VectorMapNet2023, maptr,maptrv2, streammapnet, maptracker}. 
This \gls{CD}-\gls{AP} combination has two fundamental shortcomings.

\color{black}

\begin{figure}
    \centering
     \includegraphics[width=\textwidth]{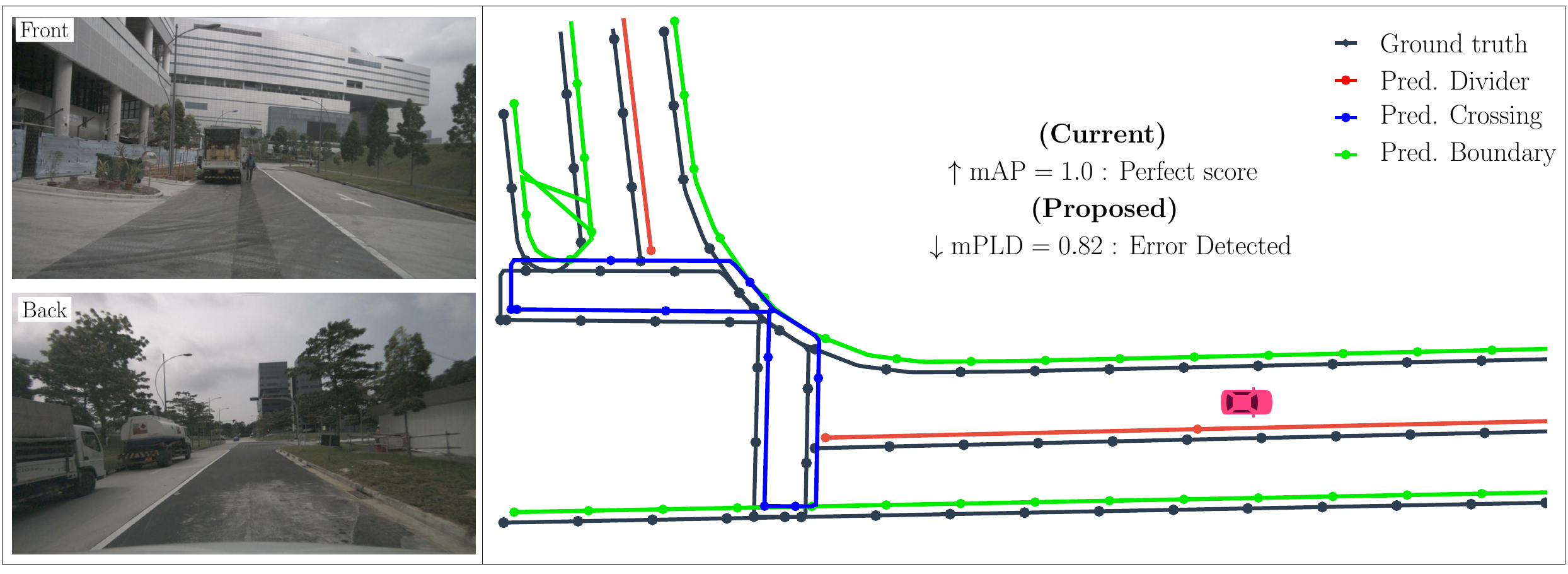}
     
    %\caption{Example showing the limitations of \gls{CD}-\gls{AP}, which reports a maximum score of $1.0$ for predictions that are a laterally shifted version of the ground truth instances, with an offset of $1.5$m. Random order is introduced in one predicted instance. Our proposed metric \MultiInstMetric~ correctly scores the predictions to be far from perfect. (The threshold is $c=2.0$ for both, and $p=1$ is set for \MultiInstMetric. All instances have a confidence of one.)}

    \caption{\gls{CD}-based \gls{mAP} reports a perfect score of 
$1.0$ for predictions diagonally shifted by $1.0$m from the ground truth, with misordering in one instance. Our proposed \gls{mDAP} correctly penalizes such predictions, yielding a score that reflects the geometric degradation. ($\tauAP \in \{1.0, 1.5,2.0\}$ is used for \gls{AP}, and $c=1.5$, $p=1$ for \MultiInstMetric.)}% All confidence scores are one.)}
    
    \label{fig:ap}
 \end{figure}
 
%First, \gls{CD} operates on unordered point sets, making it ill-suited as a similarity measure for polylines where point ordering encodes geometric structure. Beyond this, \gls{CD} has coarse matching granularity as it measures the joint proximity of full instances, and it violates the triangle inequality---hence it is not a metric. 
First, \gls{CD} operates on unordered point sets, making it ill-suited as a similarity measure for polylines where point ordering encodes geometric structure. Beyond this, \gls{CD} measures the joint proximity of full instances, resulting in coarse matching granularity and increased susceptibility to outliers. Additionally, it violates the triangle inequality, rendering it non‑metric. The metric axioms (non-negativity, identity, symmetry, and triangle inequality) ensure that an evaluation measure behaves consistently and intuitively, preventing unexpected outcomes such as two identical geometries being rated as dissimilar, or an indirect comparison path yielding a lower cost than a direct one. %These properties make it ill-suited for polylines, where point ordering encodes geometric structure. 
Some of the limitations of \gls{CD} can be mitigated by adopting \gls{FD}, as was done in~\cite{VectorMapNet2023,OpenLaneV22023}, which is order-aware and constitutes a proper metric.
However, as a supremum-based measure, \gls{FD} reflects only the worst-case pointwise error and is therefore sensitive to outliers and uninformative for the remaining geometry (as shown in \secR~\ref{sec:exp:ome:methods}).

%Second, \gls{AP} thresholds the similarity measure to produce hard \gls{TP}/\gls{FP} decisions. 
Second, \gls{AP} converts the similarity measure into hard \gls{TP}/\gls{FP} decisions by thresholding at a set of distance values (e.g., $\{0.5, 1.0, 1.5\}$\,m), which
%The resulting binary classification 
discards positional accuracy. Thus, two predictions classified as \glspl{TP} at the same threshold contribute equally regardless of their geometric accuracy. Furthermore, the thresholds commonly used are permissive relative to the requirements of downstream autonomous driving applications. %For instance, a threshold of $0.5$\,m can yield near-perfect \gls{AP} scores even when the geometric inaccuracy would critically impact planning and control. 
For instance, even the tightest threshold of $0.5$\,m implies that a prediction that is $0.5$\,m offset can still receive a perfect \gls{AP} score, even though such geometric inaccuracy would critically impact planning and control.
An adequate \gls{OME} evaluation framework should therefore jointly measure detection and geometric accuracy. Furthermore, \gls{AP} itself does not satisfy the metric axioms, making it a non-principled comparison framework.

	To address these shortcomings of existing \gls{OME} \emph{similarity measures} and \emph{multi-instance framework}, we draw inspiration from the \gls{GOSPA} family of metrics from the \gls{MOT} literature~\cite{ospa,gospa, p_gospa}, together with the string matching metric \gls{GED}~\cite{wagner-fischer1974,maes1990cyclic}. For the \emph{similarity measure}, we introduce \gls{SOSPA}, an order-aware adaptation of \gls{GOSPA}~\cite{gospa} from unordered point sets to sequences of points. \gls{GOSPA} is an assignment-based metric that optimally matches individual points between two sets and penalizes unmatched points with a tunable cutoff cost~$c$. This cutoff prevents the metric from being dominated by outliers and admits a direct physical interpretation (e.g., maximum tolerable lateral offset). \gls{SOSPA} inherits these properties while additionally enforcing that matched pairs respect the sequential ordering of points, a key structural property of polylines that \gls{GOSPA} and \gls{CD} both ignore. 
     Additionally, we show that the assignment in \Gls{SOSPA} can be cast as a \gls{GED} minimum-cost assignment problem, enabling its efficient computation via the \gls{WF} algorithm~\cite{wagner-fischer1974} and, %to facilitate the use of \gls{SOSPA} as a matching metric, we 
     provide a bounded normalization in $[0,1]$ that preserves the metric property. Furthermore, to accommodate comparison between different geometries, we extend \gls{SOSPA} to polygons (closed polylines) and establish a mathematical result enabling its efficient computation via the cyclic \gls{GED} algorithm of Maes~\cite{maes1990cyclic}.
     
    %We further cast its computation as a \gls{GED} minimum-cost trace problem solvable via the \gls{WF} algorithm~\cite{wagner-fischer1974}, and provide a bounded normalization in $[0,1]$ that preserves the metric property.  To accommodate comparison between different geometries, we extend \gls{SOSPA} to polygons (closed polylines) and establish a mathematical result enabling its efficient computation via the cyclic \gls{GED} algorithm of Maes~\cite{maes1990cyclic}.

    For the \emph{multi-instance framework}, we propose \gls{DAP}, derived from \gls{P-GOSPA}~\cite{p_gospa}. \gls{P-GOSPA} extends \gls{GOSPA} to sets of objects with associated existence probabilities, performing soft matching that retains geometric distances in the final score. Since online mapping methods output predicted polylines with confidence scores, this formulation naturally applies. \gls{DAP} instantiates \gls{P-GOSPA} with \gls{SOSPA} as the base metric, replacing the hard \gls{TP}/\gls{FP} classification of \gls{AP} with a soft assignment that jointly accounts for the \gls{SOSPA} distance between matched elements and their confidence scores. \Gls{DAP} decomposes into interpretable localization (accuracy) and detection error components, and constitutes a metric. %\fig~\ref{fig:ap} illustrates the contrast where \gls{mDAP}, the mean of \gls{DAP} across map element classes, correctly reflects the degraded prediction quality, whereas \gls{mAP} reports a perfect score.
    An illustrative comparison between our proposed metric and \gls{CD}-\gls{AP} is shown in \fig~\ref{fig:ap}, where the mean of \gls{DAP} across map element classes, \gls{mDAP}, correctly reflects degraded prediction quality, when \gls{mAP} yields a perfect score.

    %\fig~\ref{fig:ap} illustrates the contrast where \gls{mDAP}, the mean of \gls{DAP} across map element classes, correctly reflects the degraded prediction quality, whereas \gls{mAP} reports a perfect score.

    %where \gls{mDAP} denote  mean of \gls{DAP} across map element classes
    
   % An illustrative comparison between our proposed metric and \gls{CD}-\gls{AP} is shown in \fig~\ref{fig:ap}, where \gls{mAP} yields a perfect score whereas \gls{mDAP} correctly reflects the degraded prediction quality.
    
	%\reviewerA{We denote the mean of \gls{DAP} across map element classes as \gls{mDAP}, analogous to \gls{mAP}.} 

Our contributions can be summarized as follows. We
\begin{itemize}
	\item introduce \gls{SOSPA}, an order-aware polyline metric that satisfies all metric axioms, providing finer granularity in detecting matches and mismatches, while being robust to outliers, %We show that \gls{SOSPA} is equivalent to a \gls{GED} minimum-cost trace problem, allowing its efficient computation. We, further, derive  a normalized form bounded in $[0,1]$ that remains a metric for $p=1$,
    \item show that \gls{SOSPA} is equivalent to a \gls{GED} minimum-cost trace problem allowing its efficient computation, and derive a normalized form bounded in $[0,1]$ that remains a metric for $p=1$,
	\item define cyclic \gls{SOSPA} for polygons and establish a lemma enabling its efficient computation, % in $\mathcal{O}(\nx \ny \log \ny)$ time.
	\item propose \gls{DAP}, a soft multi-instance metric jointly quantifying detection and positional accuracy with an inherent decomposition into localization and detection errors, and
	%\item benchmark \gls{SOTA} online mapping methods on nuScenes under the proposed framework, demonstrating that \gls{SOSPA}-\gls{DAP} preserves \gls{CD}-\gls{AP} performance trends while revealing accuracy differences that \gls{AP} cannot capture. Notably, \gls{DAP} error decomposition shows that detection error dominates across all methods, indicating that detection capability remains the primary bottleneck for \gls{OME}.
    \item benchmark \gls{SOTA} online mapping methods on nuScenes~\cite{nuscenes}, showing that \gls{DAP} preserves relative performance rankings indicated by \gls{AP} while its error decomposition reveals that detection error dominates across all methods, a finding that \gls{AP} can not capture.
    
    %a finding that \gls{AP} obscures by discarding positional accuracy.}
\end{itemize}

\color{black}

\section{Related work}\label{sec:related:work}
	\textbf{Online vectorized mapping.}
	\gls{OME} alleviates reliance on \gls{HD} maps in autonomous driving. % by providing a cost-effective way to build maps around vehicles.
    Following the shift from rasterized to vectorized output initiated by HDMapNet~\cite{hdmapnet}, a range of DETR-inspired architectures~\cite{DETR2020} have been proposed for vectorized end-to-end map construction, including VectorMapNet~\cite{VectorMapNet2023}, MapTR~\cite{maptr}, and MapTRv2~\cite{maptrv2}. Temporal fusion has been recently incorporated in methods such as StreamMapNet~\cite{streammapnet} and MapTracker~\cite{maptracker} to further improve consistency and extend detection range. Despite these architectural advancements, evaluation practices, predominantly \gls{AP} with \gls{CD}, have not kept pace, limiting our ability to fully assess progress in this area.

	\textbf{\gls{GOSPA}-family \gls{MOT} metrics.} \Gls{OSPA}~\cite{ospa} introduced a proper metric for \gls{MOT} evaluation over finite sets. \gls{GOSPA}~\cite{gospa} generalized it by removing normalization and allowing flexible penalization of missed and false detections, with subsequent extensions to trajectories~\cite{t_gospa}, graphs~\cite{graphgospa}, and \glspl{RFS} in \gls{P-GOSPA}~\cite{p_gospa}. In particular, \gls{P-GOSPA} incorporates existence probabilities and covariances from \gls{MB} \gls{RFS}, enabling probabilistic soft detection decisions for matches and mismatches. All these \gls{GOSPA}-family metrics share a key merit, which is their decomposition into interpretable error terms associated with matched and unmatched instances. We leverage this property in \gls{DAP}, 
    which is built on \gls{P-GOSPA} with \gls{SOSPA} as its underlying single-instance matching metric. \gls{SOSPA} extends \gls{GOSPA} from unordered sets to ordered sequences and reintroduces normalization to support \gls{OME} instance matching.

	\textbf{Generalized edit distance.}
Comparing ordered sequences has a long history in string matching applications, such as text retrieval and computational biology~\cite{Li2007NormalizedLevenshtein}.
\Gls{GED} (also known as Levenshtein distance)~\cite{levenshtein1966,wagner-fischer1974,maes1990cyclic,Li2007NormalizedLevenshtein} is a metric that quantifies the cost of transforming one string (i.e., sequence) into another through substitution, insertion, and deletion operations.
Normalization into bounded metrics has been studied in~\cite{marzal1993normalizededitdistance,Li2007NormalizedLevenshtein}, while cyclic variants for closed structures were introduced in~\cite{maes1990cyclic}. \Gls{GED} is computed in $\mathcal{O}(\nx \cdot \ny)$ time via dynamic programming using the \gls{WF} algorithm~\cite{wagner-fischer1974}, which solves a minimum-cost trace problem.
Our \gls{SOSPA} formulation can be cast as a minimum-cost trace problem, enabling efficient computation using existing \gls{GED} algorithms.

\section{Order-aware polyline similarity metric: \gls{SOSPA}}

To improve the evaluation of \gls{OME} methods, we address the limitations of existing similarity measures by introducing \gls{SOSPA}, an order-aware similarity metric for finite point sequences, i.e., polylines. %, built upon the \gls{GOSPA} framework~\cite{gospa}.
% In this section, we introduce \gls{SOSPA}, an order-aware similarity metric for finite sequences of points, i.e., polylines, built upon the \gls{GOSPA} framework~\cite{gospa}. %The metric is designed to compare ordered sequences of points---i.e., polylines. 
%We treat two geometrically distinct cases: open polylines, addressed in \secR~\ref{subsec:ogospa}, and polygons (i.e., closed polylines), addressed in \secR~\ref{subsec:ogospa:polygon}.
We treat open polylines (\secR~\ref{subsec:ogospa}) and polygons (\secR~\ref{subsec:ogospa:polygon}) separately and address normalization and computation.

%In this section, we adapt the \gls{GOSPA} framework~\cite{gospa} and introduce \gls{SOSPA} as an assignment-based similarity metric for comparing pairs of polylines.

\subsection{\Glsentrylong{SOSPA}}\label{subsec:ogospa}

%Before defining the \gls{SOSPA} metric, let us define  
We begin by defining $\x = (x_1,\ldots,x_{\nx})$ and 
$\y = (y_1,\ldots,y_{\ny})$ to be finite sequences of points in
$\mathbb{R}^N$. 
An assignment set between $\mathbf{x}$ and $\mathbf{y}$ is any subset %defined as any subset
%\[
$\theta \subseteq \{1,\ldots,\nx\} \times \{1,\ldots,\ny\}$
%\]
satisfying the uniqueness constraint that each index appears at most once. %Formally, for any $(i,j),(i',j') \in \theta$, if $i=i'$ then $j=j'$ and similarly, if $j=j'$ then $i=i'$.
%\[
% $i=i' \implies j=j' \quad \text{and} \quad j=j' \implies i=i'.$
%\]
The collection of all valid assignment sets between $\mathbf{x}$ and $\mathbf{y}$ is denoted by $\Gamma_{\nx,\ny}$.
An assignment set $\theta = \{(i_k, j_k)\}_{k=1}^{|\theta|} \in \Gamma_{\nx,\ny}$ is said to be an \emph{ordered} assignment set if its index pairs respect a strictly increasing order. That is, $i_1 < i_2 < \cdots < i_{|\theta|}$, and $j_1 < j_2 < \cdots < j_{|\theta|}$.
%\begin{align}\label{eq:theta:orderedset}
%i_1 < i_2 < \cdots < i_{|\theta|}, \quad
%j_1 < j_2 < \cdots < j_{|\theta|}. 
%\end{align}
The set of all ordered assignment sets between $\mathbf{x}$ and $\mathbf{y}$ is denoted by $\AllAssignSetOrdered_{\nx,\ny}$.

\begin{definition}\label{definitation:ogospa}
Given a metric $d(.,.)$ in $\mathbb{R}^{N}$, a scalar $c> 0$, and a scalar $p$ where $1\leq p<\infty$, %we define \gls{SOSPA}, between finite sequences $\x$ and $\y$ 
	\begin{align} \label{eq:definition:OGOSPA}
		&\dogospa(\x, \y) =%\nl &
         \Bigg( \min_{\theta \in \AllAssignSetOrdered_{\nx, \ny}} 
		\sum_{(i,j) \in \theta} d(x_i,y_j)^{p}  
		+ 
	 \frac{c^{p}}{2} (\nx + \ny - 2
		|\theta|)
		\Bigg)^{1/p}.
	\end{align}
\end{definition}
From the definition, it follows immediately that \gls{SOSPA} satisfies identity, non-negativity, and symmetry. The proof for the triangle inequality can be found in Appendix~\ref{appendix:sec:ogospa:traingle:proof}.

The main difference between \gls{SOSPA} and \gls{GOSPA}~\cite{gospa} with ($\alpha = 2$) is that \gls{SOSPA} is defined over sequences instead of sets, thus constraining the matches to be ordered
(i.e., $\theta \in \AllAssignSetOrdered$).
%This ordering constraint makes the underlying optimization problem for \gls{SOSPA} fundamentally different from the bipartite matching problem in \gls{GOSPA}, which can be efficiently solved using the Hungarian algorithm~\cite{gospa}. The efficient computation of \gls{SOSPA} is discussed in the next subsection.
The cutoff distance $c$ in~\eqref{eq:definition:OGOSPA} defines the maximum allowable error for pairs to be regarded as matched points, as well as the penalty for missed and false detections given by $c^p/2$. 
The exponent $p \geq 1$ determines the sensitivity of the metric to localization errors, where a higher $p$ results in higher penalization of large deviations among matched pairs. The use of a constant penalty $c^p/2$ per unmatched point ensures the robustness of the metric to outliers, since they cannot excessively influence the final score. Furthermore, while the overall matching must respect the sequential order of points in both sequences, the formulation allows individual points in each sequence to be skipped (i.e., left unmatched) for $c^p/2$ per skipped point, providing flexibility in handling sequences of unequal length, or locally poor correspondence.

%The cutoff distance $c$ in~\eqref{eq:definition:OGOSPA} defines the maximum allowable error for matched points, as well as the penalty for missed and false detections given by $c^p/2$. 
 %The exponent $p \geq 1$ determines the sensitivity of the metric to outliers, where a higher $p$ results in higher penalization of outliers.
%The use of a constant penalty $c^p/2$ per unmatched point ensures the robustness of the metric to outliers, since they cannot excessively influence the final score. Furthermore, while the overall matching must respect the sequential order of points in both sequences, the formulation allows individual points in each sequence to be skipped (i.e., left unmatched) for $c^p/2$ per skipped point, providing flexibility in handling sequences of unequal length, or locally poor correspondence.

%\subsection{Computing \gls{SOSPA}}
\textbf{Computing \gls{SOSPA}.} The ordering constraint in~\eqref{eq:definition:OGOSPA} makes the underlying optimization problem for \gls{SOSPA} fundamentally different from the bipartite matching problem in \gls{GOSPA}, which can be efficiently solved using the Hungarian algorithm.
To find an efficient solution to~\eqref{eq:definition:OGOSPA}, we propose to cast it as a minimum-cost trace problem from the \gls{GED} literature~\cite{wagner-fischer1974, marzal1993normalizededitdistance}. \gls{GED} measures the minimum cost of transforming a sequence $\x$ into $\y$ through (i) substitutions $x_i\!\rightarrow\! y_j$ at cost $\gamma(x_i\!\rightarrow\! y_j)\geq 0$, (ii) deletions $x_i\!\rightarrow\!\Lambda$ at cost $\gamma(x_i\!\rightarrow\!\Lambda)\geq 0$, and (iii) insertions $\Lambda\!\rightarrow\! y_j$ at cost $\gamma(\Lambda\!\rightarrow\! y_j)\geq 0$, where $\Lambda$ is the null element. A \emph{trace} is a set $T$ of index pairs that respects sequence ordering, i.e., an ordered assignment set. Its cost is
%Given two sequences $\x$ and $\y$, the edit distance measures the cost of transforming $\x$ into $\y$ through elementary operations with associated non-negative costs. These are : (i) \emph{substitution} which is replacing $x_i$ with $y_j$ at cost $\gamma(x_i \rightarrow y_j) \geq 0$; (ii) \emph{deletion} which is removing $x_i$ at cost $\gamma(x_i \rightarrow \Lambda) \geq 0$, and (iii) \emph{insertion} which corresponds to inserting $y_j$ at cost $\gamma(\Lambda \rightarrow y_j) \geq 0$, 
%\begin{enumerate}[label=(\roman*)]
%	\item \emph{Substitution}: replacing $x_i$ with $y_j$ at cost $\gamma(x_i \rightarrow y_j) \geq 0$,
%	\item \emph{Deletion}: removing $x_i$ at cost $\gamma(x_i \rightarrow \Lambda) \geq 0$,
%	\item \emph{Insertion}: inserting $y_j$ at cost $\gamma(\Lambda \rightarrow y_j) \geq 0$,
%\end{enumerate}
%where $\Lambda$ denotes the null element~\cite{wagner-fischer1974, marzal1993normalizededitdistance}.
%A \emph{trace} is a set $T$ of index pairs that respects sequence ordering, i.e., an ordered assignment set satisfying~\eqref{eq:theta:orderedset}. The cost of a trace $T$ is
\begin{align}\label{eq:trace:cost:alt}
    C(T) &= \sum_{(i,j) \in T} \gamma(x_i \rightarrow y_j) + \sum_{i \notin \pi_1(T)} \gamma(x_i \rightarrow \Lambda) +%\nl &\qquad 
   \sum_{j \notin \pi_2(T)} \gamma(\Lambda \rightarrow y_j),
\end{align}
where $\pi_1(T) = \{i : (i,j) \in T\}$ and $\pi_2(T) = \{j : (i,j) \in T\}$ are index projections of~$T$.
We now show that \gls{SOSPA} is equivalent to a minimum cost trace of \gls{GED} with specific cost functions.

\begin{lemma}\label{lemma:ogospa:edit:equivalence:alt}
%The \gls{SOSPA} distance raised to the power $p$ is equivalent to the minimum cost trace of \gls{GED} between $\x$ and $\y$ with cost functions
$\dogospa(\x,\y)^p$ equals the minimum trace cost of \gls{GED} between $\x$ and $\y$ with cost functions
\begin{align}\label{eq:ogospa:operation:costs:alt}
    \gamma(x_i \rightarrow y_j) &= d(x_i, y_j)^p, \quad %\nl
    \gamma(x_i \rightarrow \Lambda) = \gamma(\Lambda \rightarrow y_j) = \frac{c^p}{2}.
\end{align}
\end{lemma}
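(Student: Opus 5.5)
The plan is to show that, for every ordered assignment set, the \gls{SOSPA} objective and the trace cost coincide. Since both problems minimise over the same feasible set, their minima must then agree.

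\textbf{Step 1: identify the feasible sets.} By definition, a trace is a set of index pairs in $\{1,\ldots,\nx\}\times\{1,\ldots,\ny\}$ that respects sequence ordering. This means the pairs can be listed as $(i_1,j_1),\ldots,(i_K,j_K)$ with $i_1<\cdots<i_K$ and $j_1<\cdots<j_K$. Strict monotonicity in both coordinates already forces each index to appear at most once, so the uniqueness constraint holds automatically. Hence the set of traces is exactly $\AllAssignSetOrdered_{\nx,\ny}$.

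\textbf{Step 2: evaluate the trace cost.} Fix $T=\theta\in\AllAssignSetOrdered_{\nx,\ny}$ and insert the cost functions~\eqref{eq:ogospa:operation:costs:alt} into~\eqref{eq:trace:cost:alt}. The substitution term gives $\sum_{(i,j)\in\theta} d(x_i,y_j)^p$. Because $\theta$ is injective in each coordinate, $|\pi_1(\theta)|=|\pi_2(\theta)|=|\theta|$. The deletion term therefore equals $\frac{c^p}{2}(\nx-|\theta|)$ and the insertion term equals $\frac{c^p}{2}(\ny-|\theta|)$. Adding the three terms gives
\[
C(\theta)=\sum_{(i,j)\in\theta} d(x_i,y_j)^p+\frac{c^p}{2}\bigl(\nx+\ny-2|\theta|\bigr),
\]
which is precisely the expression minimised in~\eqref{eq:definition:OGOSPA}.

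\textbf{Step 3: conclude.} Minimising both sides over the common feasible set $\AllAssignSetOrdered_{\nx,\ny}$ gives $\min_T C(T)=\dogospa(\x,\y)^p$. This is the definition raised to the power $p$, which is legitimate because $t\mapsto t^{1/p}$ is monotone on $[0,\infty)$. The minimum exists because the feasible set is finite.

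The only point that needs care is Step 1. One must check that the paper's notion of a trace coincides exactly with an ordered assignment set, including the empty set and the uniqueness property, so that neither minimisation ranges over extra candidates. The paper defines a trace as an ordered assignment set, so this step reduces to the observation that strict ordering implies uniqueness.
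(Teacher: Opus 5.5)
Your proposal is correct and follows the same approach as the paper: substitute the costs \eqref{eq:ogospa:operation:costs:alt} into the trace cost \eqref{eq:trace:cost:alt}, and observe that traces are exactly the ordered assignment sets, so the two minimizations coincide. You also fill in details the paper leaves implicit, namely that strict ordering implies uniqueness and that $|\pi_1(\theta)|=|\pi_2(\theta)|=|\theta|$, which gives the deletion and insertion counts.
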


\begin{proof}
Substituting~\eqref{eq:ogospa:operation:costs:alt} into~\eqref{eq:trace:cost:alt} and noting that traces are ordered assignment sets, the minimum trace cost, $\min_T C(T)$, coincides with~\eqref{eq:definition:OGOSPA} raised to the power $p$.
\end{proof}

By \lem~\ref{lemma:ogospa:edit:equivalence:alt}, \gls{SOSPA} can be computed via \gls{DP} using the \gls{WF} algorithm~\cite{wagner-fischer1974} (equivalently, Needleman-Wunsch algorithm~\cite{needleman-wunsch1970}) in $\mathcal{O}(\nx \cdot \ny)$ time.

\begin{remark}\label{remark:ogospa:metric:via:ged}
When $\gamma(\cdot \rightarrow \cdot)$ is a metric, \gls{GED} coincides with the minimum trace cost $\min_T C(T)$~\cite{wagner-fischer1974} and is a metric. The costs in~\eqref{eq:ogospa:operation:costs:alt} satisfy these conditions for $p = 1$ (since $d(\cdot,\cdot)^p$ is a metric when $p = 1$), providing an alternative proof of the triangle inequality for \gls{SOSPA} (see Appendix for the direct proof covering all $p \geq 1$). Given $\gamma(\cdot \rightarrow \cdot)$ is not a metric, the minimum-cost trace problem remains well-defined for any non-negative cost function, and the \gls{WF} algorithm solves it regardless. Hence, the \gls{WF} algorithm correctly computes \gls{SOSPA} for all $p \geq 1$.
\end{remark}

%The \gls{GED} framework admits normalization procedures~\cite{marzal1993normalizededitdistance, Li2007NormalizedLevenshtein}, which we exploit in the next subsection to obtain a normalized \gls{SOSPA} metric bounded in $[0,1]$.

%\subsection{Normalization of \gls{SOSPA}}
\textbf{Normalization of \gls{SOSPA}.}
To facilitate the use of \gls{SOSPA} as a polyline matching metric, we normalize $\dogospa$ while preserving its metric property. A natural normalization is $\dogospa/\big(\tfrac{c^p}{2}(\nx+\ny)\big)^{1/p}$. %$\dogospa(\x, \y)/ \big(\frac{c^p}{2}(\nx + \ny)\big)^{1/p}$ %One approach to normalize the metric is
%\begin{align}
	%\dNogospaSum(\x, \y) = 
 % 0\leq   \frac{ \dogospa(\x, \y)}{\big(\frac{c^p}{2}(\nx + \ny)\big)^{1/p}} \leq 1.
%\end{align}
However, this does not satisfy the triangle inequality in general. Instead, following the approach used to normalize \gls{GED} in~\cite{Li2007NormalizedLevenshtein}, we define
$\dNogospa$ as follows.
\begin{definition}\label{definitation:normalized:ogospa}
        Let $\dogospa$ be as defined in~\eqref{eq:definition:OGOSPA}. Then %normalized \gls{SOSPA} is
		\begin{align} \label{eq:normalized:ogospa}
			\dNogospa(\x, \y) = \frac{2 \cdot \dogospa(\x,\y)}{\big(\frac{c^p}{2}(\nx + \ny)\big)^{1/p} + \dogospa(\x,\y)}  \in [0,1].
		\end{align}
		%yielding $0 \leq \dNogospa(\x, \y) \leq 1$.
	\end{definition}
	For $p=1$, $\dNogospa$ satisfies the triangle inequality as shown in Appendix~\ref{appendix:sec:normalization:proof}, and is thus a metric (identity and symmetry follow from $\dogospa$).

\subsection{\gls{SOSPA} for polygons} \label{subsec:ogospa:polygon}
Since the starting point of a polygon (i.e., closed polyline) can be changed without altering its geometry, computing \gls{SOSPA} for polygons (i.e., closed polylines) requires inspecting all cyclic shift combinations of the polygons. The same difficulty arises for cyclic \gls{FD}~\cite{frechetpolygon} and cyclic \gls{GED}~\cite{maes1990cyclic}.
Therefore, in the following, we define \gls{SOSPA} over cyclic sequences and establish a \lem~that enables solving it using an efficient exact algorithm that leverages cyclic shift properties.

%%\textcolor{blue}{or affecting the relative order between points, } %computing \gls{SOSPA} for closed polylines (i.e., polygons) requires careful treatment. This is also the case when computing other order-dependent distances, such as the Fréchet distance~\cite{frechetpolygon} or \gls{GED}~\cite{maes1990cyclic} for cyclic structures.

%We begin by introducing some notation. For convenience, we use 
Let $\langle k \rangle_n$ denote a cyclic shift operation following 
\begin{align}\label{eq:cyclic:bracket:notation}
    \langle k \rangle_n = ((k - 1) \mod n) + 1, \quad k \in \mathbb{Z}, \; n \in \mathbb{N}^+.
\end{align}
Then, define the cyclically shifted version of the sequence $\x$ as
\begin{align}\label{eq:cyclic:shift:set}
    \OpShiftSet_s(\x) = \big( x_{\langle 1+s \rangle_{\nx}}, x_{\langle 2+s \rangle_{\nx}}, \ldots, x_{\langle \nx +s \rangle_{\nx}} \big), \quad  s \in \mathbb{Z}.
\end{align}
We can now readily denote the set of all cyclic shifts of a finite sequence as  $[\x] = \{ \OpShiftSet_{l}(\x): 0 \leq l\leq \nx -1 \}$, and this will be referred to as a cyclic sequence.

\begin{definition}\label{definition:ogospa:closed}
Given that two finite cyclic sequences $[\x]$ and $[\y]$, then \gls{SOSPA} is defined as
     \begin{align}\label{eq:definition:ogospa:cyclic}
    \dcyclicogospa([\x], [\y]) = \min_{\substack{s_x \in \{0,\ldots, \nx -1 \} \\ s_y \in \{0,\ldots, \ny -1 \}}}
    \dogospa\big(\OpShiftSet_{s_x}(\x), \OpShiftSet_{s_y}(\y)\big).
\end{align}
\end{definition}

A straightforward evaluation of~\eqref{eq:definition:ogospa:cyclic} requires $\nx\ny$ computations of \gls{SOSPA}.
Following a key result for cyclic \gls{GED}~\cite[\lem~3.1]{maes1990cyclic}, we show that optimizing over the cyclic shifts of one sequence while fixing the other suffices.
%we establish that to compute cyclic \gls{SOSPA}, it is enough to optimize over all cyclic shifts of a single cyclic sequence while fixing the other.
\color{black}
\begin{lemma}\label{lemma:maes:cyclic}
For any two finite cyclic sequences $[\x]$ and $[\y]$,
we have
	\begin{align}\label{eq:maes:result}
		\dcyclicogospa([\x], [\y]) &= \dcyclicogospa(\x, [\y]) %\nl&
        =\min_{s_y \in \{0,\ldots,\ny-1\}} \dogospa\big(\x, \OpShiftSet_{s_y}(\y)\big).
	\end{align}
	%That is, one can fix any representative of $\x$ and optimize only over cyclic shifts of $\y$ and vice versa.
\begin{proof}
The proof follows a similar argument to one outlined in~\cite[\lem~3.1]{maes1990cyclic}. See Appendix~\ref{appendix:sec:cyclic:proof} for details.
\end{proof}
\end{lemma}

Using the result above, and the established relation between \gls{GED} and \gls{SOSPA} in \lem~\ref{lemma:ogospa:edit:equivalence:alt}, the cyclic \gls{SOSPA} can be computed 
efficiently using Maes algorithm~\cite{maes1990cyclic} in $\mathcal{O}(\nx \ny \log \ny )$ time, and faster cyclic \gls{GED} algorithms such as~\cite{marzal2000EditDistance} directly apply. We note that the original reference on cyclic \gls{GED}~\cite{maes1990cyclic} does not establish whether cyclic \gls{GED} constitutes a metric (the same is true for cyclic \gls{FD}~\cite{frechetpolygon}). As cyclic \gls{SOSPA} shares a similar structural formulation with cyclic \gls{GED}, we likewise do not claim it to be a metric, as we believe it does not satisfy the triangle inequality in general.

\color{black}
\color{black}

\section{Multi-instance evaluation metric: \MultiInstMetric}

To score \gls{OME} methods, we need a multi-instance evaluation measure. While \gls{SOSPA} can be combined with \gls{AP}, this combination still inherits \gls{AP} rigid matching strategy and discards the geometric accuracy. Therefore, in this section, we introduce \gls{DAP}, a soft multi-instance metric that jointly evaluates detection and geometric accuracy.

\subsection{Online mapping outputs as multi-Bernoulli RFS}
For each road element class, an \gls{OME} method outputs %a collection of predicted instances modeled by 
$\X = \big\{ (r_i, \x_i) \big\}_{i=1}^{\nX}, $%. Each instance is described by a confidence score and a polyline geometry following
%\begin{align}\label{eq:polylineWithExistnecProb}
 %   \X = \big\{ (r_i, \x_i) \big\}_{i=1}^{\nX}
%\end{align}
where $r_i \in [0,1]$ is the confidence score of the $i$-th instance and $\x_i$ is a polyline (i.e., a finite sequence of points in $\mathbb{R}^N$) describing its estimated geometry. The output $\X$ parametrizes a \gls{MB} \gls{RFS} distribution over sequences, where each component $(r_i, \x_i)$ constitutes an independent Bernoulli process with existence probability $r_i$ and a degenerate single-object density concentrated at the polyline $\x_i$. Ground truth instances take the same form with $r_i = 1$ for all $i$. This probabilistic interpretation of \gls{OME} outputs promotes the use of \gls{P-GOSPA}~\cite{p_gospa}, a metric designed for comparing \gls{MB} distributions, as a multi-instance evaluation framework. Accordingly, we define \MultiInstMetric\ as the instantiation of \gls{P-GOSPA} that uses $\dNogospa$ as the base single-instance distance, and adopt it as our multi-instance evaluation metric.
%{\color{magenta}Since online mapping outputs naturally parameterize \gls{MB} \gls{RFS} distributions, \gls{P-GOSPA}~\cite{p_gospa}---a metric designed for comparing \gls{MB} distributions---provides the natural multi-instance evaluation framework. We define \MultiInstMetric\ as the instantiation of \gls{P-GOSPA} that uses $\dNogospa$ as the base single-instance distance.}
%\gls{P-GOSPA} jointly penalizes localization errors, missed detections, and false alarms, while weighting each contribution by the respective existence probabilities.

\color{black}

\subsection{\Glsentrylong{DAP} error metric}

%{\color{blue}
%Building on the \gls{MB} \gls{RFS} interpretation of online mapping baselines output, 
%In the following, we define \MultiInstMetric\ as an adaptation of \gls{P-GOSPA}~\cite{p_gospa} that uses $\dNogospa$ as the base single-instance metric.
%}

\begin{definition}\label{definition:pgospa}
	For sets of polylines $\X$ and $\Y$, $c>0$ and $1\leq p<\infty$, %and let $p, c$ be scalers, where $0<c$ and $1\leq p<\infty$, then we define

	\begin{align}\label{eq:definition:PGOSPA}
		 \dpgospa(\X, \Y) &= %\nl&
		\Bigg[ \min_{\SetUnordered \in \Gamma} 
		\Bigg(
		\sum_{(i,j) \in \SetUnordered} 
		\left[
		\min(r_{\x_i}, r_{\y_j}) \, \dNogospa(\x_i, \y_j)^p 
+\frac{1}{2} \left|r_{\x_i} - r_{\y_j} \right|
		\right]  \nl
		&  \qquad + \frac{1}{2} 
		\bigg(
		\sum_{\substack{i : \forall j, (i,j) \notin \SetUnordered}} r_{\x_i}
		+ \sum_{\substack{j : \forall i, (i,j) \notin \SetUnordered}} r_{\y_j}
		\bigg)
		\Bigg)
		\Bigg]^{\frac{1}{p}}.
	\end{align}
	
		where $\dNogospa$ is defined in~\eqref{eq:normalized:ogospa}, and where $\SetUnordered \in \Gamma$ is an assignment set between $\X$ and $\Y$.
\end{definition}

%Note that this definition is the same as the \gls{P-GOSPA} metric~\cite[\prop~2]{p_gospa} where $\alpha = 2$, $c = 1$, {\color{blue}and  $\dNogospa$ as the base distance metric. }% We use polylines with existence probability rather than \gls{MB} densities, and $\dNogospa$ as the distance measure between polylines.By~\cite[\prop~2]{p_gospa}  and the fact that $\dNogospa$ is metric for $p=1$, it follows that \MultiInstMetric\ is a metric for $p=1$. \MultiInstMetric\ can be easily solved via a 2D assignment problem~\cite{p_gospa}, using the Hungarian algorithm or other linear sum assignment solvers.

This is \gls{P-GOSPA}~\cite[\prop.~2]{p_gospa} with $\alpha=2$, $c=1$, and $\dNogospa$ as the base distance. Since $\dNogospa$ is a metric for $p=1$, so is \MultiInstMetric~by~\cite[\prop.~2]{p_gospa}. The minimization is a 2D assignment problem, solvable in polynomial time by the Hungarian algorithm.

Looking at~\eqref{eq:definition:PGOSPA}, the role of confidence scores becomes clear. Unmatched instances incur a penalty proportional to their confidence. A high-confidence miss or false alarm is penalized more than an uncertain one, reflecting the intuition that confident errors are more consequential. For matched pairs, the cost combines a confidence-weighted geometric term %scaled by $\min(r_{\x_i}, r_{\y_j})$ so that uncertain matches contribute less to the localization error, 
with a confidence-mismatch term $\tfrac{1}{2}|r_{\x_i} - r_{\y_j}|$, which penalizes disagreement between predicted and ground truth existence probabilities. Consequently, a good match requires not only geometric accuracy but also well-calibrated confidence. This makes \MultiInstMetric\ a comprehensive measure for evaluating probabilistic map instances as produced by online mapping methods. Although ground-truth carries $r_j=1$, \eqref{eq:definition:PGOSPA} is fully general and supports comparing two stochastic outputs (e.g.\ for measuring temporal consistency between consecutive map predictions).
%\footnote{Although ground truth instances carry $r_j = 1$, the formulation~\eqref{eq:definition:PGOSPA} is general and permits both $\X$ and $\Y$ to carry non-trivial existence probabilities. This enables applications beyond standard predictions Vs. ground truth benchmarking, such as measuring the temporal consistency of predictions by comparing map outputs at successive time steps.}

%\begin{remark}
%{\color{blue}
%Although ground truth instances carry $r_j = 1$, the formulation~\eqref{eq:polylineWithExistnecProb} is general and permits both $\X$ and $\Y$ to carry non-trivial existence probabilities. This enables applications beyond standard benchmarking, such as measuring the temporal consistency of predictions by comparing map outputs at successive time steps.
%}
%\end{remark}

%Looking at~\eqref{eq:definition:PGOSPA} one can easily see the effect of confidence. For unmatched instances, \MultiInstMetric\ penalizes these proportionally to their confidence score. The more confident the instances are, the higher the error is. As for matched instances, their cost depends on the geometrical error as well as the difference in confidence score. The best matches does not only need to be geometrically accurate but they need to correctly match the expected confidence. Therefore, \MultiInstMetric\ provides a comprehesive measures of probabilistic instances as those output by online mapping frameworks.

%\subsection{\MultiInstMetric\ decomposition into interpretable error terms}

\textbf{\MultiInstMetric\ decomposition into interpretable error terms.} Letting $\thetaStar$ denote the optimal assignment in~\eqref{eq:definition:PGOSPA}, define
\begin{align}
\dloc(\X,\Y) &= \!\!\sum_{(i,j)\in\thetaStar}\!\!\min(r_{\x_i},r_{\y_j})\,\dNogospa(\x_i,\y_j)^p,\label{eq:dap:loc}\\
\dcard(\X,\Y) &= \!\!\sum_{(i,j)\in\thetaStar}\!\!\tfrac{1}{2}|r_{\x_i}-r_{\y_j}| + \tfrac{1}{2}\Big(\!\sum_{i:(i,j)\notin\thetaStar}\!\! r_{\x_i} + \!\sum_{j:(i,j)\notin\thetaStar}\!\! r_{\y_j}\Big),\label{eq:dap:card}
\end{align}
so that $\dpgospa(\X,\Y)=[\dloc(\X,\Y)+\dcard(\X,\Y)]^{1/p}$. The term $\dloc$ is a probability-weighted aggregation of geometric error over matched pairs. The quantity $\dcard$ pools detection-related penalties: confidence mismatch on matched pairs, missed instances, and false alarms. Including confidence mismatch within the detection term is justified by the fact that a matched instance with confidence $r$ behaves as a $(1-r)$-weighted false detection. The constant $\tfrac{1}{2}$ on detection penalties keeps the metric robust to outliers.

\textbf{Normalization of \MultiInstMetric.}
To facilitate comparability across categories and baselines, we apply the same normalization as used for \gls{SOSPA}. That is,
\begin{align}\label{eq:dap:normalized}
\dNpgospa(\X,\Y) = \frac{2\,\dpgospa(\X,\Y)}{\big(\tfrac{1}{2}(\EnX+\EnY)\big)^{1/p} + \dpgospa(\X,\Y)} \in [0,1],
\end{align}
where $\EnX=\sum_i r_{\x_i}$ and $\EnY=\sum_j r_{\y_j}$.  The normalized \MultiInstMetric\ is positive and symmetric. Moreover, for the case $p=1$, it does satisfy the triangle inequality as shown in Appendix~\ref{appendix:sec:normalization:proof} and is therefore a metric. Beyond satisfying the triangle inequality, the case $p=1$ enables a particularly convenient use of the decomposition terms. Specifically,~\eqref{eq:definition:PGOSPA} simplifies to $\dpgospaOne=\dlocOne+\dcard$, and the decomposition extends naturally to the normalized form with
\begin{align}\label{eq:dap:loc:card:normalized}
\dNlocOne(\X,\Y) &= \frac{2\,\dlocOne(\X,\Y)}{\tfrac{1}{2}(\EnX+\EnY)+\dpgospaOne(\X,\Y)},\quad
\dNcard(\X,\Y) = \frac{2\,\dcard(\X,\Y)}{\tfrac{1}{2}(\EnX+\EnY)+\dpgospaOne(\X,\Y)},
\end{align}
giving $\dNpgospaOne=\dNlocOne+\dNcard$. We use $p=1$ throughout the experiments.

%\subsection{Scene-Level Aggregation}
%Let $k$ represent a scene index, and let $\{\X\}_{k=1}^{K}$, $\{\Y\}_{k=1}^{K}$, be sets of polylines over $K$ scenes. We can readily compute $\dpgospa$ between the two sets, following
%\begin{align}
%	\dpgospa(\{\X\}_{k=1}^{K}&, \{\Y\}_{k=1}^{K})
%	=
%	 \sum_{k=1}^{K}\dpgospa(\X_k,\Y_k)
%\end{align}
%where $k$ indexes the scenes. Normalization following~\eqref{eq:normalized:pgospa} can be applied afterwards.

\section{Experiments}
In this section, we highlight the advantages of \MultiInstMetric\ and \gls{SOSPA} using illustrative examples. We then benchmark three \gls{SOTA} \gls{OME} methods on nuScenes~\cite{nuscenes}, the most widely adopted benchmark for \gls{OME}. %The decomposition terms $\dloc$ and $\dcard$ will be referred to as \Loc{} and \Det{} in the tables.

%\begin{wrapfigure}{r}{0.6\textwidth}
\begin{figure}
    \centering
    \subfigure[\Glspl{FP} granularity]{\includegraphics[angle=0,width=0.20\textwidth]{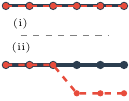}\label{fig:toy12:sub1}}
\hfill
\subfigure[\Glspl{TP} granularity]{\includegraphics[angle=0,width=0.20\textwidth]{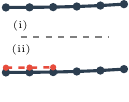} \label{fig:toy12:sub2}}
\hfill
\subfigure[Erroneous point order]{\includegraphics[angle=0,width=0.20\textwidth]{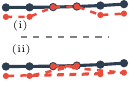}\label{fig:toy5:order}}
    
\caption{Illustrative examples highlighting the advantages of \MultiInstMetric. All confidences set to $1$.}% (see \tab~\ref{tab:toy_results:main}.)}
     \label{fig:toy:all}
     
%\end{wrapfigure}
\end{figure}
\begin{table}[!t]
\centering
\caption{\MultiInstMetric\ ($c = 0.5, p= 1$) versus \gls{CD}/\gls{FD}-\gls{AP} ($\tauAP = 0.5$) on the illustrative examples in \fig~\ref{fig:toy:all}}

\renewcommand{\arraystretch}{0.9}
\setlength{\tabcolsep}{3pt}
\begin{tabular}{ c c c c c c c }
\toprule
&\multirow{2}{*}{Case} 
  & \multicolumn{3}{c}{\MultiInstMetric } 
  & \gls{CD}-\gls{AP} & \gls{FD}-\gls{AP}\\
\cmidrule(lr){3-5}\cmidrule(l){6-6}\cmidrule(l){7-7}
& & val. & \Loc & \Det & val. & val. \\
\midrule
\multirow{2}{*}{\glspl{FP} granularity (\fig~\ref{fig:toy12:sub1})}
&(i)  & $0.0$   & $0.0$   & $0.0$   & $1.0$ & $1.0$\\
&(ii)  & $0.800$ & $0.0$   & $0.800$ & $1.0$  & $0.0$\\
\midrule
\multirow{2}{*}{\glspl{TP} granularity (\fig~\ref{fig:toy12:sub2})}
& (i)   & $1.000$ & $0.0$ & $1.000$ & $0.0$ & $0.0$ \\
&(ii)  & $0.778$ & $0.222$ & $0.556$ & $0.0$  & $0.0$ \\
\midrule
\multirow{2}{*}{Erroneous Order (\fig~\ref{fig:toy5:order})} 
&(i)  & $0.593$ & $0.593$ & $0.0$ & $1.0$ & $1.0$ \\
&(ii)  & $0.778$ & $0.222$ & $0.556$ & $1.0$ &  $0.0$ \\
\bottomrule
\end{tabular}

\label{tab:toy_results:main}
\end{table}

\subsection{Results from illustrative examples}

\fig~\ref{fig:toy:all} shows three scenarios specifically constructed to highlight limitations of \gls{CD}/\gls{FD}-\gls{AP}, with results reported in \tab~\ref{tab:toy_results:main}.
For \fig~\ref{fig:toy12:sub1}, \gls{CD}-\gls{AP} returns a perfect score for \emph{both} predictions, even though prediction~(ii) includes spurious points. \gls{DAP} correctly assigns prediction~(ii) a higher error, attributed entirely to $\dNcard$ (\Det{}). \gls{FD}-\gls{AP} also detects the false positive points, but due to its supremum-based formulation, it excessively penalizes the prediction, resulting in $\text{AP}=0$. For \fig~\ref{fig:toy12:sub2}, \gls{CD}-\gls{AP} fails to distinguish between a partial match and no match at all, both yielding $\text{AP}=0$, as does \gls{FD}-\gls{AP}. In contrast, \gls{DAP} ranks the partial match higher and, through its error decomposition, captures both the partial matching and the associated accuracy error.
For \fig~\ref{fig:toy5:order}, prediction~(ii) has the same point positions as prediction~(i), but with a scrambled order. \gls{CD}-\gls{AP} reports $1.0$ in both cases. \gls{FD}-\gls{AP} correctly detects the misordering, but penalizes it heavily yielding $\text{AP}=0$. \gls{DAP} not only detects the misordering but also penalizes it appropriately, assigning $\text{\gls{DAP}} < 1.0$. The decomposition is particularly insightful in this case, as $\dNcard=0$ for the correct ordering and $\dNcard>0$ for the scrambled one. These observations confirm that, by construction, \gls{DAP} provides finer granularity in identifying \glspl{TP} and \glspl{FP}, and is sensitive to misordering, both merits inherited from its \gls{SOSPA} base.

\color{black}

\begin{table}[!t]
\centering
\caption{\gls{AP} results for different baselines based on both \gls{CD} and \gls{FD}.}
\label{tab:allmethods:ap}
	
	\setlength{\tabcolsep}{2pt}
	\begin{tabular}{l c c c c c c c c c}
		\toprule
		& \multirow{2}{*}{\textbf{Baseline}}
		& \multicolumn{4}{c}{$\uparrow$~\gls{CD}-\gls{AP}}
		& \multicolumn{4}{c}{$\uparrow$~\gls{FD}-\gls{AP}} \\
		\cmidrule(lr){3-6}\cmidrule(l){7-10}
		& & Crossing & Divider & Boundary & \textbf{mAP}\footnotemark 
		& Crossing & Divider & Boundary & \textbf{mAP} \\
		\midrule
		
		\multirow{3}{*}{$\Rsix$}
		& \maptrv{}      & $0.127$ & $0.262$ & $\mathbf{0.445}$ & $0.278$ & $0.052$ & $0.107$ & $0.292$ & $0.150$ \\
		& \streamy{}     & $0.316$ & $0.283$ & $0.407$ & $0.335$ & $0.150$ & $\mathbf{0.132}$ & $0.257$ & $0.180$ \\
		& \maptracker{}  & $\mathbf{0.441}$ & $\mathbf{0.288}$ & $0.440$ & $\mathbf{0.390}$ & $\mathbf{0.242}$ & $0.130$ & $\mathbf{0.295}$ & $\mathbf{0.222}$ \\
		\midrule
		
		\multirow{3}{*}{$\Rhundred$}
		& \maptrv{}      & $0.081$ & $0.174$ & $0.293$ & $0.183$ & $0.017$ & $0.020$ & $0.051$ & $0.029$ \\
		& \streamy{}     & $0.249$ & $0.167$ & $0.241$ & $0.219$ & $0.036$ & $0.020$ & $0.036$ & $0.031$ \\
		& \maptracker{}  & $\mathbf{0.455}$ & $\mathbf{0.235}$ & $\mathbf{0.381}$ & $\mathbf{0.357}$ & $\mathbf{0.147}$ & $\mathbf{0.040}$ & $\mathbf{0.085}$ & $\mathbf{0.091}$ \\
		\bottomrule
	\end{tabular}
	\vspace{-1em}
\end{table}

\begin{comment}
    =================================Legacy=====================
\begin{table}[!t]
\centering
\caption{\gls{mAP} results for different baselines. $\Rsix = 60\times30$ m $\Rhundred = 100\times50$ m.}
\label{tab:allmethods:ap}

\setlength{\tabcolsep}{3pt}
\begin{tabular}{l c c c c c}
\toprule
& \textbf{Baseline} & Crossing & Divider & Boundary & \textbf{mAP}\footnotemark $\uparrow$ \\
\midrule

\multirow{3}{*}{$\Rsix$}
& Maptrv2        & $0.127$ & $0.262$ & $\mathbf{0.445}$ & $0.278$ \\
& StreamMapNet   & $0.309$ & $0.274$ & $0.397$ & $0.327$\\
&MapTracker     & $\mathbf{0.441}$ & $\mathbf{0.288}$ & $0.440$ & $\mathbf{0.390}$ \\
\midrule

\multirow{3}{*}{$\Rhundred$}
& Maptrv2        & $0.081$   &  $0.174$   & $0.293$   & $0.182$   \\
& StreamMapNet   & $0.244$ & $0.163$ & $0.236$ & $0.214$ \\
& MapTracker     & $\mathbf{0.455}$ & $\mathbf{0.235}$ & $\mathbf{0.381}$ & $\mathbf{0.357}$ \\
\bottomrule
\end{tabular}

\end{table}
\end{comment}

\begin{table*}[t]
\centering

\caption{\gls{DAP} results for different baselines with $c=1.5,~ p=1$.}

\label{tab:allmethods:dpog:maxcostWIthDecomposition}

\setlength{\tabcolsep}{3pt}
\begin{tabular}{l c c c c c c c }
\toprule
 & \textbf{Baseline} & Crossing & Divider & Boundary & $\downarrow$~mPLD &\meanLoc & \meanDet \\
\midrule

%\multirow{6}{*}{\rotatebox{90}{$c=1.0$}} & \multirow{3}{*}{$\Rsix$}
%& Maptrv2        & $0.959$  & $0.922$  & $0.882$  & $0.921$ & $0.182$ & $0.739$ \\
%&& StreamMapNet   & $0.929$  & $0.901$  & $0.881$  & $0.904$ & $0.242$ & $0.662$ \\
%&& MapTracker     & $\mathbf{0.880}$  & $\mathbf{0.882}$  & $\mathbf{0.843}$  & $\mathbf{0.868}$ & $0.285$ & $0.583$ \\
%\cmidrule{2-9}

%\multirow{3}{*}{$\Rhundred$}
%& Maptrv2        & $0.977$  & $0.955$  & $0.932$  & $0.955$ & $0.128$ & $0.827$ \\
%&& StreamMapNet   & $0.960$  & $0.952$  & $0.943$  & $0.952$ & $0.158$ & $0.794$ \\
%&& MapTracker     & $\mathbf{0.918}$  & $\mathbf{0.927}$  & $\mathbf{0.905}$  & $\mathbf{0.917}$ & $0.221$ & $0.696$ \\
%\bottomrule

%\multirow{6}{*}{\rotatebox{90}{$c=1.5$}}&
\multirow{3}{*}{$\Rsix$}
& Maptrv2        & $0.943$  & $0.899$  & $0.837$  & $0.893$ & $0.184$ & $0.709$ \\
& StreamMapNet   & $0.895$  & $0.870$  & $0.832$  & $0.866$ & $0.249$ & $0.617$ \\
& MapTracker     & $\mathbf{0.832}$  & $\mathbf{0.849}$  & $\mathbf{0.785}$  & $\mathbf{0.822}$ & $0.276$ & $0.546$ \\
\cmidrule{2-8}

\multirow{3}{*}{$\Rhundred$}
& Maptrv2        & $0.966$  & $0.939$  & $0.903$  & $0.936$ & $0.144$ & $0.792$ \\
& StreamMapNet   & $0.939$  & $0.932$  & $0.914$  & $0.928$ & $0.188$ & $0.740$ \\
& MapTracker     & $\mathbf{0.883}$  & $\mathbf{0.904}$  & $\mathbf{0.864}$  & $\mathbf{0.884}$ & $0.227$ & $0.657$ \\
 \bottomrule

\end{tabular}

\end{table*}

%\footnotetext{Differences in \gls{mAP} from reported values in~\cite{streammapnet,maptracker} are due to fixed-N points sampling (e.g., for $60\times30$, \streamy~and \maptracker~reports $0.339$, and $0.403$, respectively).}

\footnotetext{Differences with \cite{streammapnet,maptracker} are due to fixed‑$N$ point sampling (e.g.\ for $\Rsix$, \streamy\ and \maptracker\ report $0.341$ and $0.403$).}

%\subsection{Results from \gls{OME} methods}
\subsection{Benchmark \gls{OME} methods on nuScenes}\label{sec:exp:ome:methods}
\textbf{Setup.} We evaluate three \gls{SOTA} methods \maptrv~\cite{maptrv2}, \streamy~\cite{streammapnet}, and \maptracker~\cite{maptracker}, on the new nuScenes split of~\cite{streammapnet} that addresses geographic data leakage issue (also reported in~\cite{loc_all_eval}). We retrain \maptrv\ on this split. Predictions and ground truth are equidistantly sampled at $0.5$\,m (see discussion in Appendix~\ref{sec:sampling:effects}). 
%For fairness and compatibility with existing evaluations, we use the ground-truth annotations corresponding to each baseline (i.e., the ground truth used by each method's original evaluation protocol).
We report results at two evaluation ranges, corresponding to a near-field range $\Rsix=60\!\times\!30$\,m and a long range $\Rhundred=100\!\times\!50$\,m. 
We evaluate \gls{AP} and \gls{DAP} for pedestrian crossing, divider, and boundary. The average performance across classes is reported using \gls{mAP} and \gls{mDAP}, respectively.
The standard thresholds $\tauAP\in\{0.5,1.0,1.5\}$\,m for $\Rsix$ and $\{1.0,1.5,2.0\}$\,m for $\Rhundred$ are used for \gls{CD}-\gls{AP}. We additionally evaluate \gls{FD}-\gls{AP} using the thresholds $\tauFD\in\{1.0, 2.0, 3.0\}$\,m from OpenLaneV2~\cite{OpenLaneV22023}. Since these thresholds are already extensive, we apply them for both $\Rsix$ and $\Rhundred$. For \MultiInstMetric, since the metric already accounts for both detection and accuracy in a single score, we use a \emph{single} cutoff $c = 1.5$ for both ranges (additional values are tested in \secR~\ref{sup:sec:cutoff}). When evaluating \MultiInstMetric, we compute \gls{SOSPA} using~\eqref{eq:definition:OGOSPA} for polylines and~\eqref{eq:definition:ogospa:cyclic} for polygons (with duplicate closing points removed), with $p=1$ in both cases. To handle directionality, we evaluate \gls{SOSPA} for each instance pair under the original ordering and with one instance direction reversed, selecting the minimum value.

\textbf{Comparing methods.} \tab~\ref{tab:allmethods:ap} reports \gls{AP} and \tab~\ref{tab:allmethods:dpog:maxcostWIthDecomposition} reports \MultiInstMetric\ and its decomposition. 
Looking at \gls{FD}-\gls{AP} results, we see that despite the extensive thresholds used, the scores on $\Rhundred$ are near zero for all methods (e.g., \gls{mAP} of $0.029$, $0.031$, and $0.091$), rendering them uninformative to a practitioner seeking to differentiate method quality at longer range.
Comparing \gls{CD}-based \gls{mAP} and \gls{mDAP}, we see that the two frameworks agree on the relative ranking across both ranges. That is, \maptracker\ outperforms \streamy, which in turn outperforms \maptrv. This consistency confirms that \MultiInstMetric\ captures the essential performance differences. Beyond this agreement, the \MultiInstMetric\ decomposition reveals a structural pattern that \gls{mAP} cannot expose. That is, across all methods and both ranges, the detection error $\dNcard$ (\meanDet) dominates the localization error $\dNloc$ (\meanLoc) by a factor of two to four. Detection capability, i.e., missed instances and false alarms, is currently the primary bottleneck for \gls{OME}, not geometric precision per detected element. %For instance, at $\Rsix$ \maptracker\ achieves $\dNloc=0.276$ and $\dNcard=0.546$. 
The gap widens at the longer $\Rhundred$ range, indicating that detection coverage is harder to maintain as range increases.

%\fig~\ref{fig:qualitative} illustrates a clear example of the binary behavior of \gls{AP} across classes. For instance, \gls{AP} reports a perfect score for the boundary class despite the fact that the prediction misses a portion of the left-side road edge. Conversely, it assigns a zero score to dividers, deeming them unusable, even though predictions lie within an acceptable distance for all dividers. In contrast, \MultiInstMetric\ correctly reflects that boundary detections are not perfect and that road dividers are not entirely unusable, thereby providing a more nuanced assessment of prediction quality. The metric could be made even more accurate by allowing multiple-to-multiple assignments, in which fragmented detections or detections covering multiple ground-truth instances are permitted to match, with an appropriate penalty for instance fragmentation or merging. This limitation can be addressed in future work by adopting an approach similar to trajectory-\gls{GOSPA}~\cite{t_gospa}.

\textbf{Qualitative Comparison.} \fig~\ref{fig:qualitative} highlights the binary nature of \gls{AP} across classes. For example, \gls{AP} assigns a perfect score to the boundary class even though part of the left road edge is missed, while rating dividers as unusable despite predictions falling within an acceptable distance. In contrast, \MultiInstMetric\ more accurately reflects that boundary detections are imperfect and dividers are not entirely unusable, yielding a more meaningful assessment of prediction quality. \Gls{DAP} could be further improved by allowing multiple-to-multiple instance assignments with penalties for instance fragmentation or merging, as in trajectory-\gls{GOSPA}~\cite{t_gospa}, and is left for future work. For more qualitative examples, see Appendix~\ref{sec:additional:qualitative}.

\textbf{Runtime.} Benchmarking the evaluation time using \streamy{} on $\Rsix$, \gls{mDAP} achieves comparable runtime to \gls{CD}-based \gls{mAP}  ($\approx38$~s each with $64$~workers). This shows that \MultiInstMetric\ advantages do not come at the cost of runtime. For more details, see the analysis in Appendix~\ref{sec:sampling:effects}.

\begin{figure}
	\centering
	\includegraphics[width=0.95\textwidth]{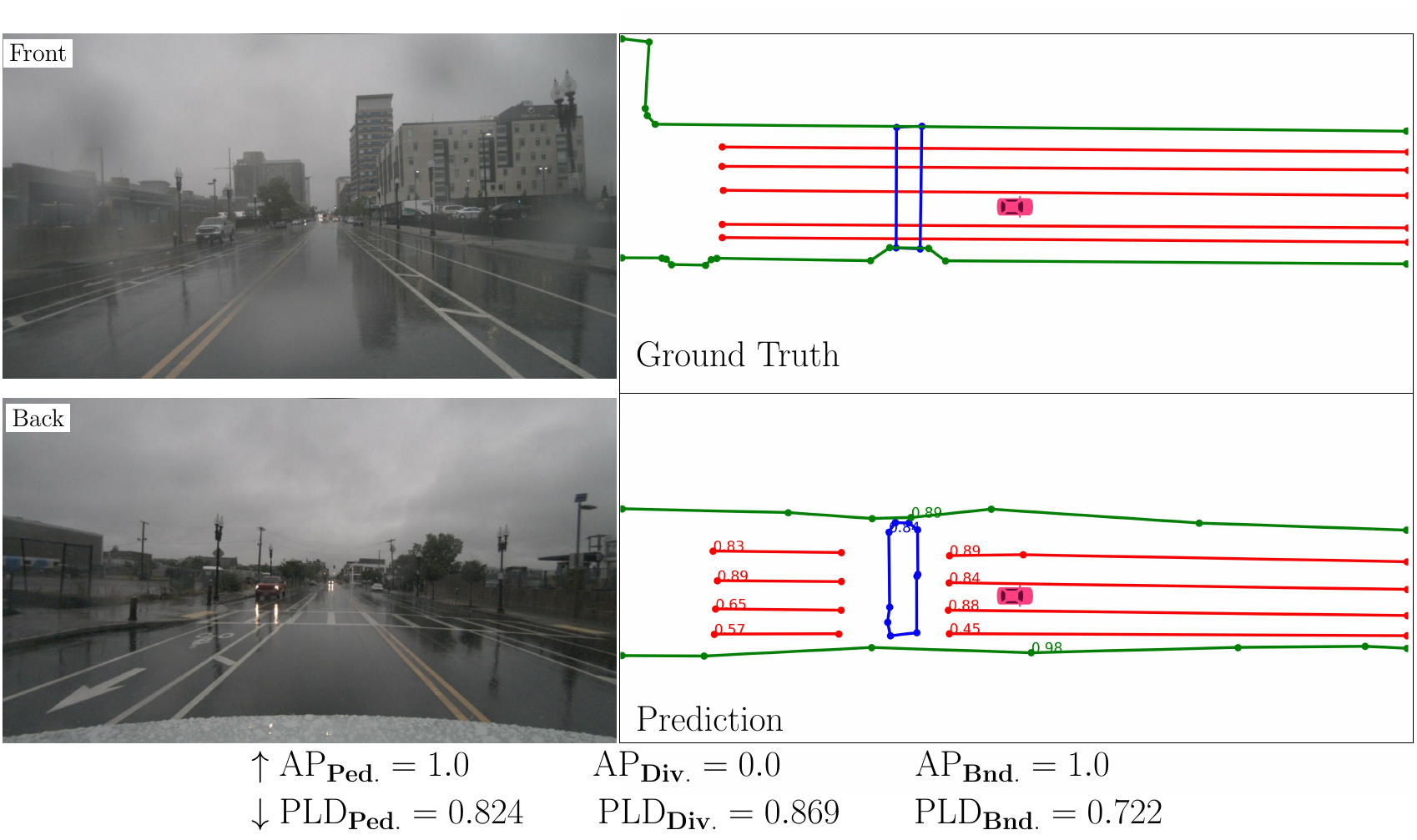}
	\vspace{-0.5em}
	\caption{Qualitative example from \maptracker{} on $\Rhundred$, comparing \gls{AP} to \gls{DAP} for pedestrian crossing (blue), divider (red), and boundary (green). ($\tauAP \in \{1.0, 1.5,2.0\}$ is used for \gls{AP}, and $c=1.5$, $p=1$ for \MultiInstMetric{}. Confidences are shown on the figure.)}
	
	\label{fig:qualitative}
    \vspace{-1.5em}
\end{figure}

%We presented \gls{SOSPA}, an order-aware polyline similarity metric, and \MultiInstMetric, a soft multi-instance evaluation metric that interprets online mapping outputs as \gls{MB} \glspl{RFS}. Both satisfy the metric axioms, are computationally tractable, and---unlike \gls{CD}--\gls{mAP}---jointly capture detection and geometric accuracy. On nuScenes, the framework preserves the relative ranking of three \gls{SOTA} methods while exposing accuracy patterns hidden by \gls{mAP}: detection capability, not geometric precision, is currently the primary limitation of \gls{OME}.

%\textbf{Limitations and future work.} Our framework focuses on geometry; instance-level topological structure (e.g.\ which lanes connect to which) is not captured. A natural extension follows graph variants such as \gls{GOSPA} for graphs~\cite{graphgospa} and lane-topology benchmarks~\cite{OpenLaneV22023}. The cutoff $c$ remains a hyperparameter: choosing it requires balancing application-driven tolerances against discriminative power across methods. Finally, while we focus on nuScenes for direct comparability with prior work, evaluating on additional datasets and on stochastic outputs would further test the framework generality.

\section{Conclusions}
To improve the understanding of \gls{OME} methods,
we introduced \gls{SOSPA}, an order-aware polyline similarity metric, and \gls{DAP}, a soft multi-instance evaluation metric, as alternatives to \gls{CD}-based \gls{mAP} for evaluating \gls{OME} methods. Both satisfy metric axioms, remain computationally tractable, and jointly capture detection and positional accuracy. 
Benchmarking three \gls{SOTA} methods on nuScenes shows that \gls{DAP} preserves relative performance rankings while its error decomposition reveals that detection error dominates across all methods, a finding that \gls{mAP} fails to capture.% by discarding positional accuracy.

\textbf{Limitations and future work.} Our framework focuses on geometry, and instance-level topological structure (e.g.\ which lanes connect to which) is not captured. One promising direction for future work is therefore to investigate topology-aware extensions of \gls{SOSPA} and \gls{DAP}~\cite{graphgospa,OpenLaneV22023}. Additionally, while we focus on nuScenes for its popularity, evaluating on additional datasets and on other methods would further test the generality of the framework.

Finally, we hope that an order-aware, decomposable, metric-based evaluation framework will support continued progress in online mapping for safer autonomous driving.

\bibliographystyle{unsrt}
\bibliography{references}
%%%%%%%%%%%%%%%%%%%%%%%%%%%%%%%%%Appendix%%%%%%%%%%%%%%%%%%%%%%%%%%%%%%%%%

\newpage
\appendix

\section{Additional experimental results}
This section presents supplementary experimental results that complement the main paper. We report the impact of polyline sampling strategies, provide a runtime comparison, analyze the sensitivity of \MultiInstMetric\ to the cutoff parameter, and include additional qualitative examples.

\subsection{Effect of sampling and runtime comparison}\label{sec:sampling:effects}
\tab~\ref{tab:sampling_effect:runtime} reports the sensitivity of \MultiInstMetric\ and its decomposition to the equidistant sampling distance, along with evaluation runtime, benchmarked using \streamy\ on $\Rsix$. \MultiInstMetric\ is stable across sampling distances, with deviations from the $0.5$~m baseline below $1\%$. The localization component $\dNloc$ shows the largest sensitivity, since denser sampling better captures geometric detail, while $\dNcard$ is virtually unaffected. For reference, the sensitivity of \MultiInstMetric\ is slightly lower than the \gls{CD}-\gls{mAP} sensitivity level. In terms of computational cost, at $0.5$~m sampling distance, \MultiInstMetric\ has comparable runtime to \gls{CD}-\gls{mAP}, with modest overhead only at finer resolutions. Hence the merits of \MultiInstMetric\ does not come at the cost of compute time. Since we argue for using \MultiInstMetric\ with only a single threshold, the current comparison against the standard three-threshold \gls{mAP} is the relevant practical scenario, where the overhead is negligible.
 % %the practical comparison is against the standard three-threshold \gls{mAP} pipeline, where the overhead is negligible.
For completeness, \tab~\ref{tab:sampling_effect:runtime} also includes results for the conventional fixed-$N$ interpolation approach ($N=200$ points), which matches the \gls{mDAP} and approximate \gls{mAP} values of the $0.5$~m equidistant baseline but incurs substantially higher runtime, with $238.2$~s for \MultiInstMetric\ and $160.1$~s for \gls{CD}-\gls{mAP}, compared to $37.8$~s and $38.0$~s, respectively at $0.5$~m equidistant sampling (a factor of approximately $4$--$6\times$ slower). This confirms that equidistant sampling is strictly preferable, as it is not only more intuitive but also yields equivalent metric values while reducing computation time by approximately one order of magnitude.

%In summary, the results in \tab~\ref{tab:sampling_effect:runtime} show that $0.5$~m is an appropriate threshold that balances accuracy with runtime. 

\subsection{Effect of \MultiInstMetric\ cutoff $c$}\label{sup:sec:cutoff}
%\textbf{Effect of cutoff $c$.} 
\tab~\ref{tab:streammapnet:threshold:effect} shows the effect of the cutoff parameter $c$ on \MultiInstMetric\ for all baselines on $\Rhundred$. Increasing $c$ from $1.0$ to $1.5$ to $2.0$ consistently reduces overall \gls{mDAP} for all methods. Looking at the decomposition of error, we see that the reduction is mostly in $\dNcard$, which is expected, since more points are assigned at this threshold. The relative ranking among baselines remains unchanged, confirming the robustness of \MultiInstMetric\ to this hyperparameter choice. 

From another perspective, at $c = 1.0$ the error is very high for all methods on $\Rhundred$, with \gls{mDAP} values exceeding $0.91$. This reflects the current state of \gls{OME} methods, whose geometric accuracy has not yet reached a level at which a $1.0$~m cutoff can meaningfully discriminate between baselines. In contrast, $c = 1.5$ provides a more balanced setting, as it is sufficiently restrictive to penalize genuine localization errors while still allowing enough assignment coverage to reveal informative differences between methods. We therefore adopt $c = 1.5$ in the experiments presented in the main text.
\color{black}
%\textbf{Category-wise analysis.}
%Pedestrian crossings consistently yield the highest \MultiInstMetric\ scores (worst performance) across all methods and ranges. This is expected given that crossings are typically polygons with complex shapes and less frequent occurrence, making them harder to detect and localize. Road boundaries exhibit the lowest scores (best performance), likely due to their simpler geometric structure and more consistent appearance.

\begin{table*}[t!]
	\centering
	\caption{Effect of sampling distance on \gls{mDAP} ($c=1.5$, $p=1$) and \gls{CD}-\gls{mAP} benchmarked using \streamy\ on $\Rsix$. The runtime is based on the average of multiple runs with $64$~workers.}
	%Percentage differences ($\Delta$) are relative to the $0.5$~m baseline.
	\label{tab:sampling_effect:runtime}
    
	\setlength{\tabcolsep}{3pt}
	\begin{tabular}{c cc cc cc cc | c c c }
		\toprule
		\multirow{2}{*}{\textbf{Samp. (m)}} &
		\multicolumn{2}{c}{$\uparrow$~\textbf{\gls{mAP}}} &
		\multicolumn{2}{c}{$\downarrow$~\textbf{\gls{mDAP}}} &
		\multicolumn{2}{c}{$\dNloc$} &
		\multicolumn{2}{c|}{$\dNcard$}  &
		\textbf{\gls{mAP}} &
		\textbf{m\MultiInstMetric} & \\
		\cmidrule(lr){2-3}\cmidrule(lr){4-5}\cmidrule(lr){6-7}\cmidrule(lr){8-9}\cmidrule(lr){10-12}
                   & val. & $\Delta$ & val. & $\Delta$ & val. & $\Delta$ & val. & $\Delta$  &  \multicolumn{2}{c}{Runtime (s)} & $\Delta$ \\
		\midrule
		$0.25$ &
		$0.345$ & ${+3.0\%}$ &
		$0.860$ & ${-0.7\%}$ &
		$0.241$ & ${-3.2\%}$ &
		$0.619$ & ${+0.3\%}$ &
		$65.9$ & $87.0$ & ${+32\%}$ \\
		$0.50$ &
		$0.335$ & --- &
		$0.866$ & --- &
		$0.249$ & --- &
		$0.617$ & --- &
		$38.0$ & $37.8$ & ${-1\%}$ \\
		$0.75$ &
		$0.323$ & ${-3.6\%}$ &
		$0.873$ & ${+0.8\%}$ &
		$0.259$ & ${+4.0\%}$ &
		$0.614$ & ${-0.5\%}$ &
		$28.5$ & $31.6$ & ${+11\%}$ \\
		\midrule
		$N\!=\!200$ &
		$0.347$ & ${+3.6\%}$ &
		$0.866$ & ${0.0\%}$ &
		$0.234$ & ${-6.0\%}$ &
		$0.632$ & ${+2.4\%}$ &
		$160.1$ & $238.2$ & ${+49\%}$ \\
		\bottomrule
	\end{tabular}
\end{table*}

\begin{comment}
    \begin{table}[!t]
\centering
\caption{Effect of cutoff parameter $c$ on \gls{mDAP} ($p=1$) for all baselines on $\Rhundred$.}
\label{tab:streammapnet:threshold:effect}

\setlength{\tabcolsep}{4pt}
\begin{tabular}{l ccc ccc}
\toprule
& \multicolumn{3}{c}{$c=1.0$} & \multicolumn{3}{c}{$c=2.0$} \\
\cmidrule(lr){2-4}\cmidrule(lr){5-7}
\textbf{Baseline} & $\downarrow$~\gls{mDAP}& \meanLoc & \meanDet & $\downarrow$~\gls{mDAP} & \meanLoc & \meanDet \\
\midrule
Maptrv2      & $0.955$ & $0.128$ & $0.827$ & $0.921$ & $0.149$ & $0.772$ \\
StreamMapNet & $0.952$ & $0.158$ & $0.794$ & $0.909$ & $0.197$ & $0.712$ \\
MapTracker   & $\mathbf{0.917}$ & $0.221$ & $0.696$ & $\mathbf{0.858}$ & $0.217$ & $0.641$ \\
\bottomrule
\end{tabular}

\end{table}
\end{comment}

\begin{table}[!t]
\centering
\caption{Effect of cutoff parameter $c$ on \gls{mDAP} ($p=1$) for all baselines on $\Rhundred$.}
\label{tab:streammapnet:threshold:effect}

\setlength{\tabcolsep}{4pt}
\begin{tabular}{l ccc ccc ccc}
\toprule
& \multicolumn{3}{c}{$c=1.0$} & \multicolumn{3}{c}{$c=1.5$} & \multicolumn{3}{c}{$c=2.0$} \\
\cmidrule(lr){2-4}\cmidrule(lr){5-7}\cmidrule(lr){8-10}
\textbf{Baseline} & $\downarrow$~\gls{mDAP}& \meanLoc & \meanDet & $\downarrow$~\gls{mDAP} & \meanLoc & \meanDet & $\downarrow$~\gls{mDAP} & \meanLoc & \meanDet \\
\midrule
Maptrv2      & $0.955$ & $0.128$ & $0.827$ & $0.936$ & $0.144$ & $0.792$ & $0.921$ & $0.149$ & $0.772$ \\
StreamMapNet & $0.952$ & $0.158$ & $0.794$ & $0.928$ & $0.188$ & $0.740$ & $0.909$ & $0.197$ & $0.712$ \\
MapTracker   & $\mathbf{0.917}$ & $0.221$ & $0.696$ & $\mathbf{0.884}$ & $0.227$ & $0.657$ & $\mathbf{0.858}$ & $0.217$ & $0.641$ \\
\bottomrule
\end{tabular}

\end{table}

\subsection{Additional qualitative examples}\label{sec:additional:qualitative}

\fig~\ref{fig:qualitative:sup:a} to \fig~\ref{fig:qualitative:sup:c} provide additional qualitative comparisons between \gls{CD}-\gls{AP} and \gls{DAP} across more scenes, complementing the example shown in the main text. The examples are generated using \streamy{} on the $\Rhundred$ evaluation range. \fig~\ref{fig:qualitative:sup:a} and \fig~\ref{fig:qualitative:sup:c} highlight cases where \gls{AP} assigns a perfect score despite noticeable geometric inaccuracies. \fig~\ref{fig:qualitative:sup:b} illustrates the failure of \gls{AP} to correctly capture partial matches, which commonly occurs for dividers. These limitations are effectively addressed by \MultiInstMetric{}, where both geometric inaccuracies and the fine-grained detection of partial matches lead to more informative scores. Hence, \gls{DAP} provides a more faithful assessment of \gls{OME} map quality.

%The following figures compare \gls{CD}-\gls{AP} and \gls{DAP} across additional scenes from \streamy{} on the $\Rhundred$ evaluation range, complementing the qualitative example in the main paper. Each scene highlights a different class-level failure mode of \gls{CD}-\gls{AP}: geometric inaccuracies and partial matches that fall within the binary matching threshold are awarded full credit, masking prediction quality that \gls{DAP} is able to distinguish. These failure cases are particularly relevant within the \gls{OME} framework, where fine-grained geometric accuracy and partial-match penalization are essential for a faithful assessment of map quality.

\begin{figure}[h]
	\centering
    \includegraphics[width=0.95\textwidth]{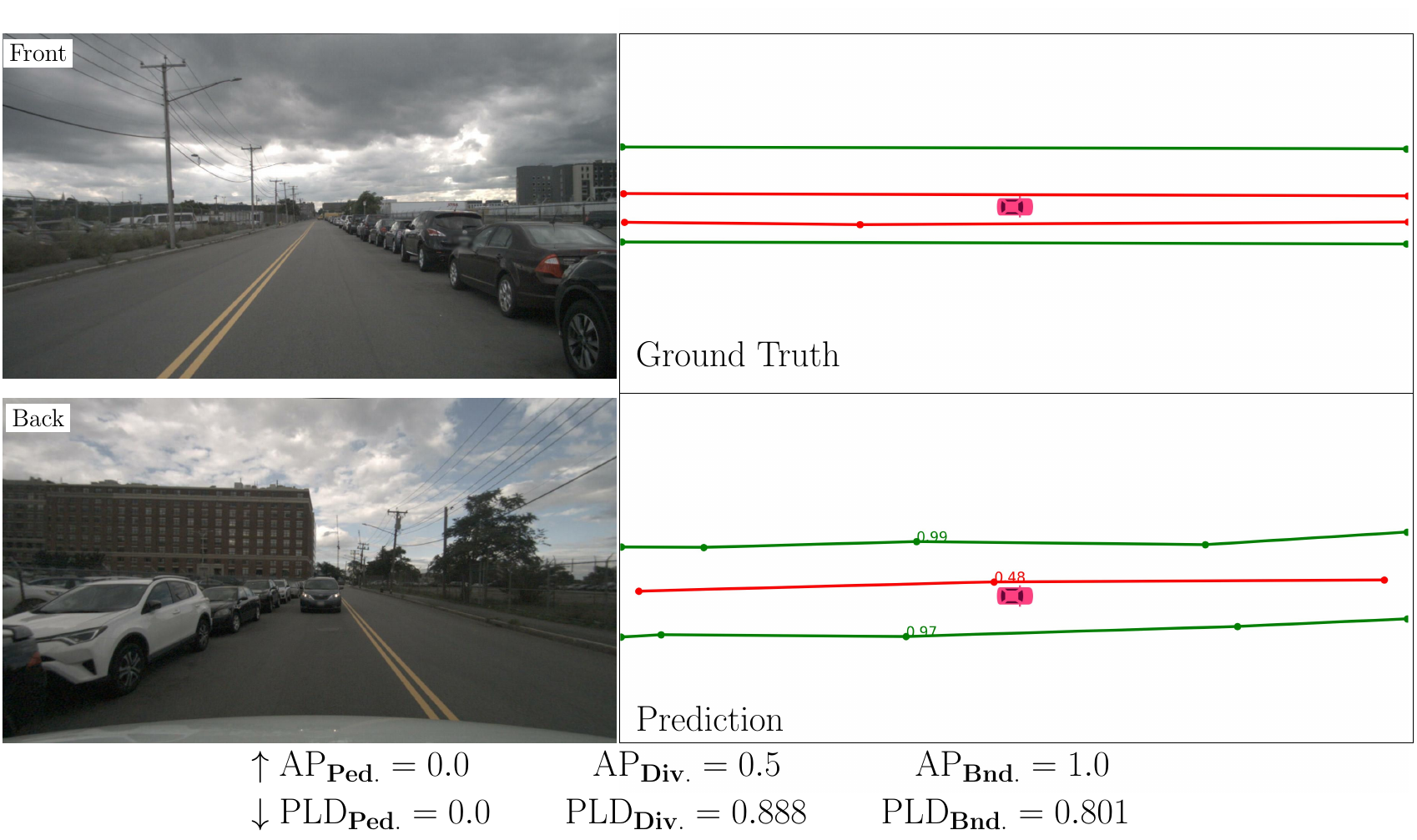}
	\caption{Additional qualitative example from \streamy{} on $\Rhundred$ ($\tauAP \in \{1.0, 1.5,2.0\}$ for \gls{AP}, and $c=1.5$, $p=1$ for \MultiInstMetric{}. Confidences shown on the figure.)}
	\label{fig:qualitative:sup:a}
\end{figure}

\begin{figure}[h]
	\centering
    \includegraphics[width=0.95\textwidth]{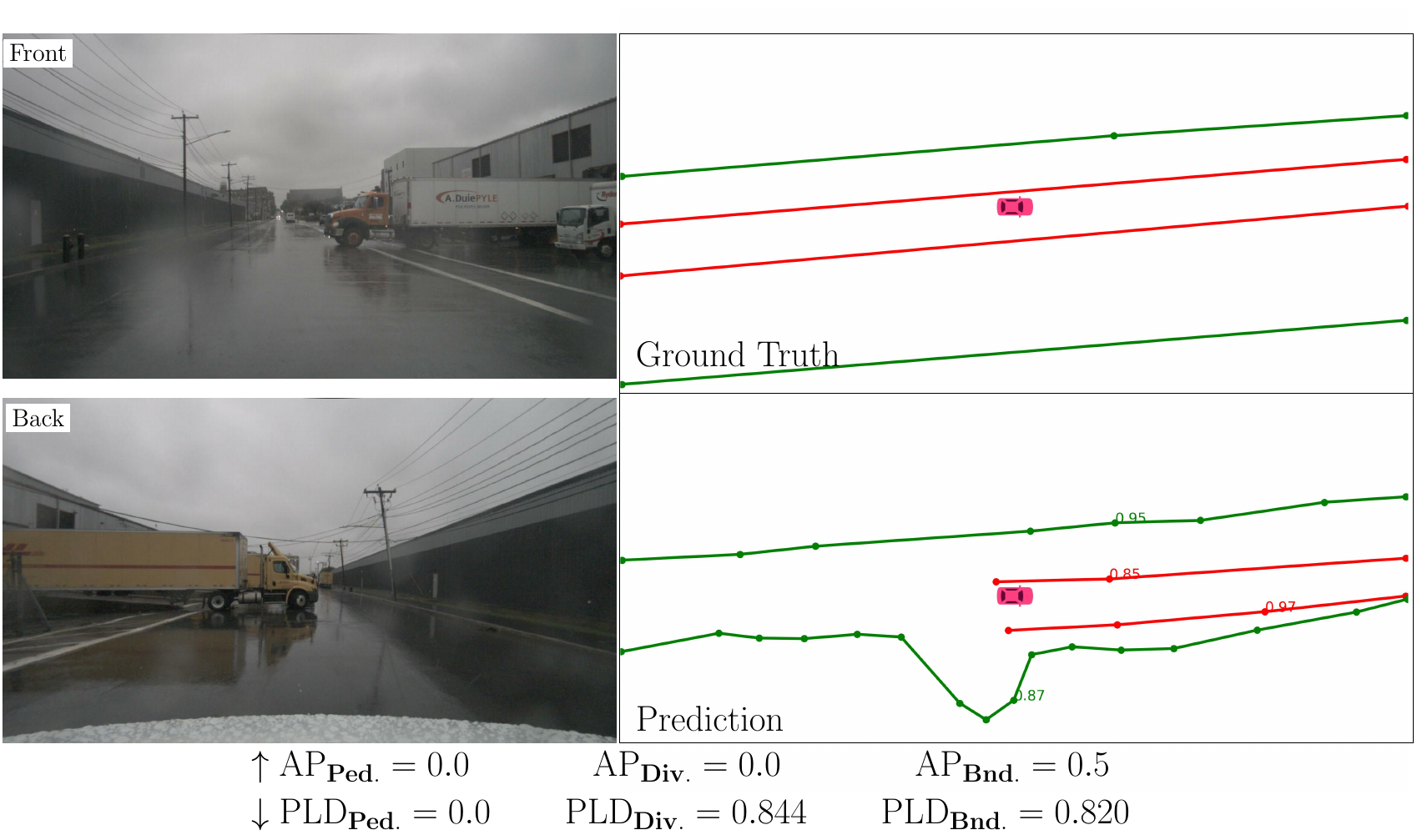}
	\caption{Additional qualitative example from \streamy{} on $\Rhundred$ ($\tauAP \in \{1.0, 1.5,2.0\}$ for \gls{AP}, and $c=1.5$, $p=1$ for \MultiInstMetric{}. Confidences shown on the figure.)}
	\label{fig:qualitative:sup:b}
\end{figure}

\begin{figure}[h]
	\centering
    \includegraphics[width=0.95\textwidth]{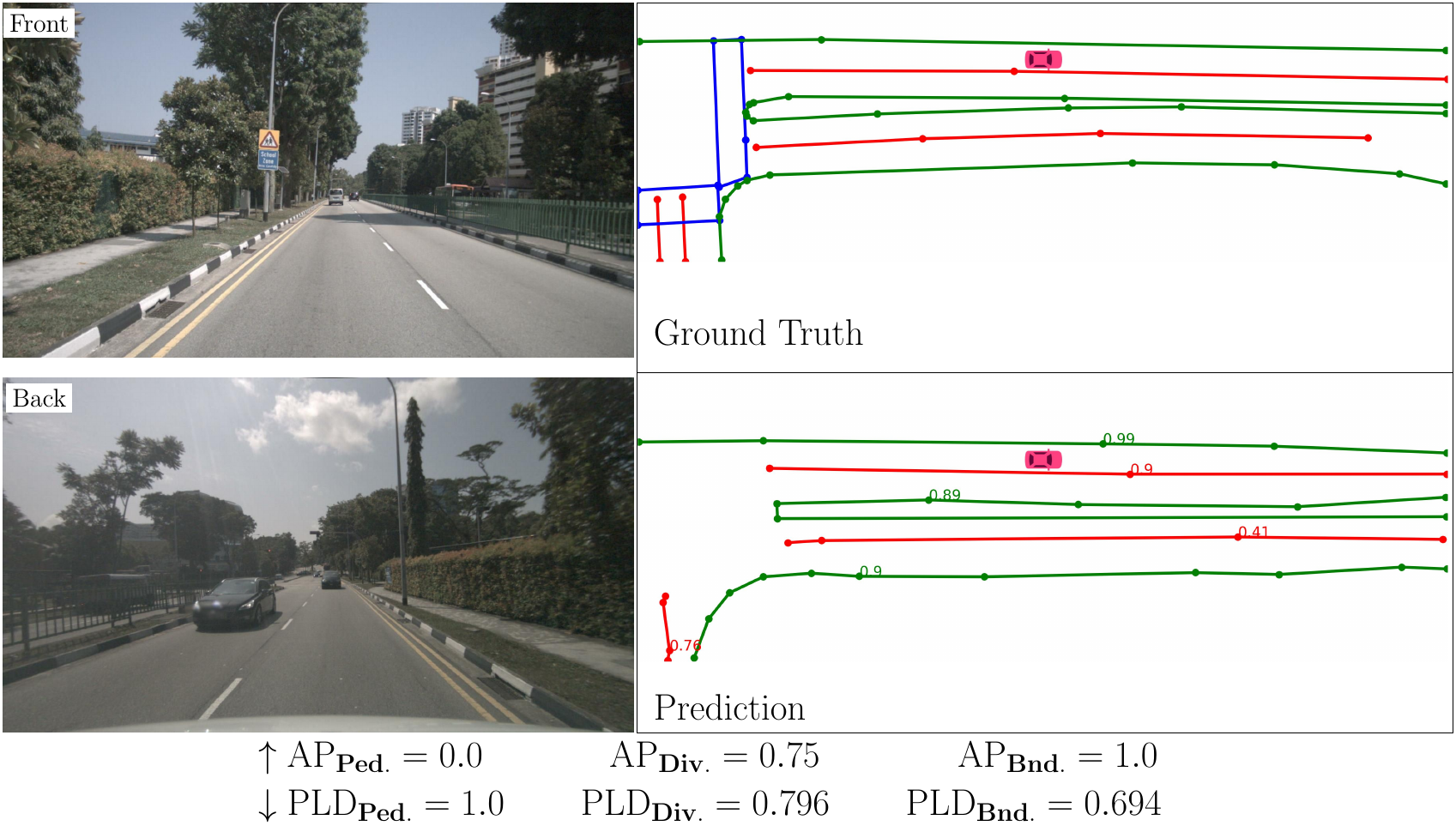}
	\caption{Additional qualitative example from \streamy{} on $\Rhundred$ ($\tauAP \in \{1.0, 1.5,2.0\}$ for \gls{AP}, and $c=1.5$, $p=1$ for \MultiInstMetric{}. Confidences shown on the figure.)}
	\label{fig:qualitative:sup:c}
\end{figure}

%===================================================Proof==========================================%
\section{Proof of the triangle inequlaity for \gls{SOSPA}}\label{appendix:sec:ogospa:traingle:proof}
For an ordered assignment set $\theta \in \AllAssignSetOrdered_{\nx,\ny}$ and finite sequneces $\x$, $\y$, define the assignment cost
		\begin{align}\label{eq:assignment:cost}
		C(\theta, \x, \y) = 
			\left(\sum_{(i,j) \in \theta} d(x_i, y_j)^p + \frac{c^p}{2}(\nx + \ny - 2|\theta|)\right)^{1/p}
		\end{align}
		so that 
        \begin{align}\label{eq:ogospa:relation:C}
            \dogospa(\x, \y) = \min_{\theta \in \AllAssignSetOrdered_{\nx,\ny}}  C(\theta, \x, \y)
        \end{align}

\subsection{Proof of triangle inequlaity for non-normalized \gls{SOSPA}}
%To show the triangle inequality for \gls{SOSPA}, we start by first showing the triangle inequality for unnormalized \gls{SOSPA}.

\begin{theorem}\label{thm:ogospa:non_normalize:triangle}
The non-normalized \gls{SOSPA} defined in~\eqref{eq:definition:OGOSPA}
satisfies the triangle inequality
\begin{align}
   \dogospa(\x, \y) \leq \dogospa(\x, \z) +\dogospa(\z, \y)
\end{align}
\end{theorem}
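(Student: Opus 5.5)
We follow the standard GOSPA triangle-inequality argument, adapted to ordered assignments. Fix optimal ordered assignments $\Setxz \in \AllAssignSetOrdered_{\nx,\nz}$ and $\Setzy \in \AllAssignSetOrdered_{\nz,\ny}$ attaining $\dogospa(\x,\z)$ and $\dogospa(\z,\y)$ in~\eqref{eq:ogospa:relation:C}. Define the composed assignment
\begin{align*}
\SetComposed = \{(i,j) : \exists k \text{ with } (i,k)\in\Setxz,\ (k,j)\in\Setzy\}.
\end{align*}
We first check that $\SetComposed \in \AllAssignSetOrdered_{\nx,\ny}$. Uniqueness holds because both $\Setxz$ and $\Setzy$ are partial injections. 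Ordering holds because if $(i,j),(i',j')\in\SetComposed$ go through $k,k'$ with $i<i'$, then order-preservation of $\Setxz$ gives $k<k'$, and order-preservation of $\Setzy$ then gives $j<j'$. This step is the only genuinely new ingredient compared to GOSPA. It is what allows the composed assignment to be used as a feasible candidate, so that $\dogospa(\x,\y)\le C(\SetComposed,\x,\y)$.

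It therefore remains to show $C(\SetComposed,\x,\y)\le C(\Setxz,\x,\z)+C(\Setzy,\z,\y)$. We partition indices of $\z$ into three kinds:
\begin{itemize}
\item those matched in both assignments, giving the set $\JComposed$, with $|\JComposed| = |\SetComposed|$;
\item those matched only in $\Setxz$;
\item those matched only in $\Setzy$ (plus those matched in neither).
\end{itemize}
We then write each $p$-th power cost as a sum of per-element terms and build vectors indexed by a common index set. For each $k\in\JComposed$, the $\x$–$\y$ term is $d(x_i,y_j)$, the $\x$–$\z$ term is $d(x_i,z_k)$, and the $\z$–$\y$ term is $d(z_k,y_j)$. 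Each $x_i$ unmatched in $\SetComposed$ contributes $c/2^{1/p}$ on the left. On the right it contributes either $c/2^{1/p}$ (if unmatched in $\Setxz$), or, if it matched some $z_k$ whose partner in $\Setzy$ is missing, the pair $d(x_i,z_k)$ and $c/2^{1/p}$ (from $z_k$ being unmatched in $\Setzy$). The $y_j$ are handled symmetrically. Extra unmatched $z$'s only add nonnegative terms on the right.

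With these entries, the left-hand vector $a$ and right-hand vectors $b,e$ (padding with zeros) satisfy $a_\ell \le b_\ell + e_\ell$ entrywise. On matched triples this is the triangle inequality of $d$. On unmatched entries it holds since $c/2^{1/p}$ appears on the right. Minkowski's inequality in $\ell^p$ then gives $\|a\|_p\le\|b\|_p+\|e\|_p$, i.e.\ the claim.

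The main obstacle is the bookkeeping that makes this entrywise pairing injective. Each right-hand term, especially the $c^p/2$ penalties on $\z$ elements, must be used by at most one left-hand entry. We handle this by charging each $x_i$ unmatched in $\SetComposed$ but matched to $z_k$ to the penalty of $z_k$ in $\Setzy$, and each such $y_j$ to the penalty of its $z_k$ in $\Setxz$. Distinct $k$'s keep these charges disjoint, and the two kinds of charge use penalties from different assignments. Note that no cutoff $\min(d,c)$ appears in the definition, so no truncation case analysis is needed.
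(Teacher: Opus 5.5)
Your proof is correct and follows the paper's argument: compose optimal ordered assignments through $\z$, check that the composition is a valid ordered assignment, and finish with the triangle inequality of $d$ on matched triples plus Minkowski's inequality. The only difference is the bookkeeping for unmatched penalties. The paper lumps each assignment's total penalty into one coordinate, $g_{\x,\z}^{1/p}$ and $g_{\z,\y}^{1/p}$, and then invokes the counting inequality $|\Setxz|+|\Setzy|\le\nz+|\SetComposed|$. Your injective charging instead pairs each left-hand penalty with a distinct right-hand term (the element's own penalty, or its $\z$-partner's penalty in the other assignment), which establishes the same count implicitly.
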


To show \theo~\ref{thm:ogospa:non_normalize:triangle}, we begin by stating and proving several lemmas.

Let $\theta_{\x,\z} \in \AllAssignSetOrdered_{\nx,\nz}$ and $\theta_{\z,\y} \in \AllAssignSetOrdered_{\nz,\ny}$ be ordered assignment sets. Then we define the composed set $\SetComposed$ as
\begin{align}\label{eq:composed:set:definition}
	\SetComposed \triangleq \theta_{\x,\z} \circ \theta_{\z,\y}  = \{(i,m):& (i,j) \in \theta_{\x,\z} \text{  and  }  (j,m) \in \theta_{\z,\y} \nl
    & \qquad  \text{ for some } j  \}
\end{align}
Moreover, we define
\begin{align}
	J_{\x,\z} &= \{ j : (i,j) \in \theta_{\x,\z} \text{ for some } i \}  \nl
	\Jzy &= \{ j : (j,m) \in \theta_{\z,\y} \text{ for some } m \} \nl
	\JComposed &= J_{\x,\z} \cap \Jzy  \label{eq:jxz:jzy:jc}
\end{align}
Observe that $J_{\x,\z}$ and $\Jzy$ represents indices in $\z$ matched to $\x$ and $\y$, respectively.

\begin{lemma}\label{lemma:composed:ordered}
The composed set $\theta_c$ defined in~\eqref{eq:composed:set:definition} is an ordered assignment set between $\x$ amd $\y$, i.e., $\theta_c \in \AllAssignSetOrdered_{\nx,\ny}$.
\end{lemma}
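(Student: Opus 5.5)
We must check two properties of $\SetComposed$: it is a valid assignment set (each index of $\x$ and each index of $\y$ appears at most once), and its pairs can be listed with both coordinates strictly increasing. Both follow from the corresponding properties of $\theta_{\x,\z}$ and $\theta_{\z,\y}$.

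\emph{Uniqueness.} Suppose $(i,m),(i,m')\in\SetComposed$, witnessed by $j$ and $j'$, so that $(i,j),(i,j')\in\theta_{\x,\z}$ and $(j,m),(j',m')\in\theta_{\z,\y}$. Since $i$ appears at most once in $\theta_{\x,\z}$, we get $j=j'$. Then $j$ appears at most once in $\theta_{\z,\y}$, so $m=m'$. The symmetric argument, passing from $m$ to $j$ and then from $j$ to $i$, shows that $m$ determines $i$. Hence $\SetComposed\in\Gamma_{\nx,\ny}$. Along the way we obtain a well-defined bijection between $\SetComposed$ and $\JComposed$: each $(i,m)\in\SetComposed$ has a unique intermediate index $j$, and each $j\in\JComposed$ yields exactly one pair.

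\emph{Order.} It suffices to show that $\SetComposed$ is a chain under the componentwise order, i.e.\ for distinct pairs $(i,m),(i',m')\in\SetComposed$ we have either $i<i'$ and $m<m'$, or $i>i'$ and $m>m'$. Then sorting the pairs by first index gives strictly increasing second indices, which is exactly the definition of an ordered assignment set.

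Let $j,j'$ be the intermediate indices. By uniqueness, $j\neq j'$; without loss of generality $j<j'$. Because $\theta_{\x,\z}$ is ordered, any two of its pairs are comparable in both coordinates simultaneously. So $(i,j),(i',j')\in\theta_{\x,\z}$ with $j<j'$ forces $i<i'$. Likewise, $(j,m),(j',m')\in\theta_{\z,\y}$ with $j<j'$ forces $m<m'$.

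The only point needing care is the translation from the paper's definition, a strictly increasing enumeration, to the pairwise statement used above. In a strictly increasing enumeration $i_1<\dots<i_K$, $j_1<\dots<j_K$, the second coordinates are distinct, so $j<j'$ means $j$ occurs earlier in the enumeration than $j'$, and therefore its first coordinate is smaller. The argument is otherwise elementary.
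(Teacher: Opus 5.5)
Your proof is correct and follows essentially the same route as the paper: both use the orderedness of $\theta_{\x,\z}$ and $\theta_{\z,\y}$ to pass a strict inequality through the intermediate index. The paper starts from $i_1<i_2$ and derives $j_1<j_2$ and then $m_1<m_2$, whereas you start from $j<j'$ and push it to both ends; you also explicitly verify the at-most-once condition and the bijection $\SetComposed \leftrightarrow \JComposed$, which the paper leaves implicit even though its subsequent cardinality lemma relies on $|\JComposed| = |\SetComposed|$.
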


\begin{proof}
Suppose $(i_1, m_1), (i_2, m_2) \in \theta_c$ with $i_1 < i_2$. By the definition of $\theta_c$, there exist indices $j_1, j_2$ such that:
\begin{align}
    (i_1, j_1) \in \theta_{\x,\z}, \quad (j_1, m_1) \in \theta_{\z,\y}, \nl
    (i_2, j_2) \in \theta_{\x,\z}, \quad (j_2, m_2) \in \theta_{\z,\y}.
\end{align}

Since $\theta_{\x,\z} \in \AllAssignSetOrdered_{\nx,\nz}$ is an ordered set and $i_1 < i_2$, the ordering constraint implies $j_1 < j_2$.

Since $\theta_{\z,\y} \in \AllAssignSetOrdered_{\nz,\ny}$ is an ordered set and $j_1 < j_2$, the ordering constraint implies $m_1 < m_2$.

Therefore, whenever $i_1 < i_2$ for pairs in $\theta_c$, we have $m_1 < m_2$, which means $\theta_c$ satisfies the ordering constraint. Hence, $\theta_c \in \AllAssignSetOrdered_{\nx,\ny}$.
\end{proof}

\begin{lemma}\label{lemma:composed:cardinality}
Let $\SetComposed$ be as defined in~\eqref{eq:composed:set:definition} then it follows that
\begin{align}\label{eq:cardinality:inequality}
    |\theta_{\x,\z}| + |\theta_{\z,\y}| \leq \nz + |\theta_c|
\end{align}
\end{lemma}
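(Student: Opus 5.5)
The plan is a counting argument on the indices of $\z$, organised through the sets $\Jxz$, $\Jzy$ and $\JComposed$ defined in~\eqref{eq:jxz:jzy:jc}.

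First, since each assignment set satisfies the uniqueness constraint, every pair $(i,j)\in\Setxz$ is determined by its second coordinate $j$. Hence the projection $(i,j)\mapsto j$ is a bijection from $\Setxz$ onto $\Jxz$, so $|\Setxz| = |\Jxz|$. By the same argument, the projection $(j,m)\mapsto j$ gives $|\Setzy| = |\Jzy|$.

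Second, I will show $|\SetComposed| = |\JComposed|$. To do this, map each $j\in\JComposed$ to the pair $(i,m)$, where $i$ is the unique index with $(i,j)\in\Setxz$ and $m$ is the unique index with $(j,m)\in\Setzy$. By definition~\eqref{eq:composed:set:definition}, this pair lies in $\SetComposed$, and every element of $\SetComposed$ arises from some such $j$. For injectivity, suppose two indices $j\neq j'$ produce the same pair $(i,m)$. Then $(i,j)$ and $(i,j')$ both belong to $\Setxz$, which violates uniqueness of $i$. So the map is a bijection, and $|\SetComposed| = |\Jxz\cap\Jzy|$.

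Finally, $\Jxz$ and $\Jzy$ are both subsets of $\{1,\ldots,\nz\}$. Inclusion–exclusion then gives
\begin{align*}
|\Jxz| + |\Jzy| = |\Jxz\cup\Jzy| + |\Jxz\cap\Jzy| \le \nz + |\JComposed|.
\end{align*}
Substituting the three cardinality identities from the previous steps yields~\eqref{eq:cardinality:inequality}.

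The only step needing care is the bijection between $\SetComposed$ and $\JComposed$, specifically that no two distinct $j$ collapse to the same $(i,m)$. This depends solely on the uniqueness constraint, so ordering plays no role in this lemma.
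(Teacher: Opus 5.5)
Your proof is correct and follows the paper's argument: inclusion--exclusion applied to $\Jxz$ and $\Jzy$ as subsets of $\{1,\ldots,\nz\}$, with the union bounded by $\nz$. The only difference is that you explicitly justify the identities $|\Setxz|=|\Jxz|$, $|\Setzy|=|\Jzy|$ and $|\SetComposed|=|\JComposed|$ from the uniqueness constraint, whereas the paper uses them without comment.
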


\begin{proof}
	Given $\Jxz$, $\Jzy$, and $\JComposed$ as defined in~\eqref{eq:jxz:jzy:jc}, using the inclusion-exclusion principle we have
\begin{align}
    |\Jxz \cup \Jzy| &= |\Jxz| + |\Jzy| - |\Jxz \cap \Jzy| \nl
    &=  |\theta_{\x,\z}| + |\theta_{\z,\y}| - |\theta_c|
\end{align}
Combining this with the fact that $|\Jxz \cup \Jzy| \leq \nz$ we establish~\eqref {eq:cardinality:inequality} and the lemma follows.
\end{proof}

\begin{lemma}\label{lemma:alignment:composition}
Let $\x$, $\y$, $\z$ be finite sequences. There exists a valid ordered alignment $\theta_{\x,\y} \in \AllAssignSetOrdered_{\nx,\ny}$ such that the alignment cost~\eqref{eq:assignment:cost} satisfies
\begin{align}
    C(\theta_{\x,\y}, \x, \y) \leq \dogospa(\x, \z) + \dogospa(\z, \y) \label{eq:ogospa:triangle:anyset}
\end{align}
\end{lemma}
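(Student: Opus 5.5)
Take $\theta_{\x,\z}$ and $\theta_{\z,\y}$ to be optimal ordered assignments attaining $\dogospa(\x,\z)$ and $\dogospa(\z,\y)$ (they exist since the sets $\AllAssignSetOrdered$ are finite). Set $\theta_{\x,\y} = \SetComposed = \theta_{\x,\z}\circ\theta_{\z,\y}$; by \lem~\ref{lemma:composed:ordered} it lies in $\AllAssignSetOrdered_{\nx,\ny}$. It then suffices to prove $C(\SetComposed,\x,\y) \le C(\theta_{\x,\z},\x,\z) + C(\theta_{\z,\y},\z,\y)$.

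The plan is to write each of the three $p$-th power costs as a sum over a common index set, so that Minkowski's inequality in $\ell^p$ applies. Index terms by $j\in\{1,\dots,\nz\}$ together with the unmatched points of $\x$ and $\y$. For $j \in \JComposed$ with $(i,j)\in\theta_{\x,\z}$ and $(j,m)\in\theta_{\z,\y}$, the $\x$--$\y$ cost contains $d(x_i,y_m)^p$. By the triangle inequality for $d$ this is at most $(a_j+b_j)^p$, where $a_j=d(x_i,z_j)$ and $b_j=d(z_j,y_m)$.

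All remaining contributions to $C(\SetComposed,\x,\y)^p$ are unmatched penalties, totalling $\frac{c^p}{2}(\nx+\ny-2|\SetComposed|)$. For every $j\notin\JComposed$ we assign $a_j,b_j$ as follows: $a_j=d(x_i,z_j)$ if $j\in\Jxz\setminus\Jzy$, $a_j = c/2^{1/p}$ if $z_j$ is unmatched in $\theta_{\x,\z}$, and analogously for $b_j$. Points of $\x$ unmatched in $\theta_{\x,\z}$ contribute an extra term with $a=c/2^{1/p}$, $b=0$, and similarly for $\y$. With these choices $\sum a^p = C(\theta_{\x,\z},\x,\z)^p$ and $\sum b^p = C(\theta_{\z,\y},\z,\y)^p$ exactly.

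The main obstacle is showing that the leftover penalty $\frac{c^p}{2}(\nx+\ny-2|\SetComposed|)$ is dominated by $\sum (a+b)^p$ over the non-$\JComposed$ terms. Here I would use \lem~\ref{lemma:composed:cardinality}. Each $\x$-point matched in $\theta_{\x,\z}$ but not in $\SetComposed$ goes to some $j\in\Jxz\setminus\Jzy$. For such $j$, $z_j$ is unmatched in $\theta_{\z,\y}$, so $b_j=c/2^{1/p}$ and $(a_j+b_j)^p\ge c^p/2$, which covers the $\x$-point's penalty. The symmetric argument covers $\y$-points matched in $\theta_{\z,\y}$ but not in $\SetComposed$ via $j\in\Jzy\setminus\Jxz$. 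Points of $\x$ or $\y$ unmatched in the original assignments are covered by their own extra terms. Since the sets $\Jxz\setminus\Jzy$ and $\Jzy\setminus\Jxz$ are disjoint and the maps from lost $\x$-points and lost $\y$-points into them are injective, no term is used twice. Equivalently, the count $\nx+\ny-2|\SetComposed|$ is bounded using \eqref{eq:cardinality:inequality}.

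Combining the two parts gives $C(\SetComposed,\x,\y)^p \le \sum (a+b)^p$. Minkowski's inequality then yields $C(\SetComposed,\x,\y)\le(\sum a^p)^{1/p}+(\sum b^p)^{1/p}$, which equals $\dogospa(\x,\z)+\dogospa(\z,\y)$ by optimality of the chosen assignments.
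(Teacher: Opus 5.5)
Your proposal is correct and follows the paper's proof. Both compose the optimal assignments into $\SetComposed$, write the two costs as $\ell^p$ norms of vectors $\mathbf{u},\mathbf{v}$, apply Minkowski, and use the triangle inequality for $d$ on the $\JComposed$ terms. The only difference is bookkeeping: the paper lumps all unmatched penalties into two extra coordinates $g_{\x,\z}^{1/p}$, $g_{\z,\y}^{1/p}$ and then invokes \lem~\ref{lemma:composed:cardinality} to show that $g_{\x,\z}+g_{\z,\y}\ge\frac{c^p}{2}(\nx+\ny-2|\SetComposed|)$. Your per-point placement of the $\z$-penalties with the injective charging argument covers that amount directly, so your appeal to the cardinality lemma is not actually needed.
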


\begin{proof}

Given $\Jxz$, $\Jzy$, and $\JComposed$ as defined in~\eqref{eq:jxz:jzy:jc}, let $n=|\Setxz| + |\Setzy| - |\SetComposed|$, then let us define two sequnces $\mathbf{u}, \mathbf{v} \in \mathbb{R}_{\geq 0}^{n+2}$ as
\begin{align}
(u_k, v_k) =%&= \nl 
&\begin{cases}
\big(d(x_{\sigma(j)},z_j),\; d(z_j,y_{\pi(j)})\big), & k \leftrightarrow j \in J_c, \\
\big(d(x_{\sigma(j)},z_j),\; 0\big), & k \leftrightarrow j \in J_{\x,\z}\setminus J_c, \\
\big(0,\; d(z_l,y_{\pi(l)})\big), & k \leftrightarrow l \in J_{\z,\y}\setminus J_c, \\
\big(g_{\x,\z}^{1/p},\; 0\big) \text{ or } \big(0,\; g_{\z,\y}^{1/p}\big), & k = n{+}1,\, n{+}2,
\end{cases}
\end{align}
where $g_{\x,\z} = \tfrac{c^p}{2}(\nx{+}\nz{-}2|\Setxz|)$ and $g_{\z,\y} = \tfrac{c^p}{2}(\nz{+}\ny{-}2|\Setzy|)$, and $\sigma$, $\pi$ are ordered mappings on $\{1, 2,\ldots, \nx\}$, and $\{1, 2,\ldots, \ny\}$, respectively. They satisfy  
\begin{align}
    \{(\sigma(j),\pi(j)) :j \in \JComposed \} = \SetComposed 
\end{align}

By construction, and using~\eqref{eq:assignment:cost}, we have that
\begin{align}
    C(\theta_{\x,\z}, \x, \z) + C(\theta_{\z,\y}, \z, \y) = \|\mathbf{u}\|_p + \|\mathbf{v}\|_p \geq \|\mathbf{u}+\mathbf{v}\|_p. \label{eq:dxz:dzy:u+v}
\end{align}
where the inequality is obtained by applying, Minkowski's inequality ($1\leq p$).
Expanding the terms, we arrive at
\begin{align}
    \|\mathbf{u}+\mathbf{v}\|_p^p &=\sum_{j\in \JComposed}\big(d(x_{\sigma(j)},z_j) + d(z_j,y_{\pi(j)}))^p %\nl  & \qquad
    +\underbrace{A}_{\geq\, 0} + g_{\x,\z}+g_{\z,\y} \\
    &\geq \sum_{(i,j) \in \SetComposed} d(x_i,y_j)^p + \frac{c^p}{2} (\nx+\ny-2|\SetComposed|) +%\nl & \qquad 
    c^p(\nz+ |\SetComposed| - |\Setxz| - |\Setzy|) \label{eq:ogopsa:proof:basemetric:trinagle} \\
    &\geq \sum_{(i,j) \in \SetComposed} d(x_i,y_j)^p + \frac{c^p}{2} (\nx+\ny-2|\SetComposed|) \label{eq:ogogpsa:proof:u+v:inequlaity}
\end{align}
where~\eqref{eq:ogopsa:proof:basemetric:trinagle} follows by dropping $A\geq 0$ and applying the triangle inequality on the base metric $d$. The last inequality holds by applying the result of \lem~\ref{lemma:composed:cardinality}.

Taking the $p$-th root in~\eqref{eq:ogogpsa:proof:u+v:inequlaity}, and combining with~\eqref{eq:dxz:dzy:u+v}, we obtain
\begin{align}
    C(\theta_c, \x, \y) \leq C(\theta_{\x,\z}, \x, \z) + C(\theta_{\z,\y}, \z, \y)
\end{align}
for any valid ordered assignments $\theta_{\x,\z}$ and $\theta_{\z,\y}$. Choosing the optimal assignments, i.e., $C(\theta_{\x,\z}^\star, \x, \z) = \dogospa(\x,\z)$ and $C(\theta_{\z,\y}^\star, \z, \y) = \dogospa(\z,\y)$, we establish~\eqref{eq:ogospa:triangle:anyset}, since $\SetComposed \in \AllAssignSetOrdered_{\x,\y}$ is a valid ordered assignment by \lem~\ref{lemma:composed:ordered}.
\end{proof}

Now the result of~\theo~\ref{thm:ogospa:non_normalize:triangle} follows immediately by the result of \lem~\ref{lemma:composed:ordered} and~\eqref{eq:ogospa:relation:C}.

\subsection{Proof of the triangle inequality for \gls{SOSPA} normalization}\label{appendix:sec:normalization:proof}
	 The proof of triangle inequality for $\dNogospa$ is similar to that of $\dNpgospa$. We therefore demonstrate only the proof for this latter.

	 \section{Proof of the triangle inequality for  \MultiInstMetric~normalization ($p=1$)}\label{appendix:sec:normalization:proof:dap}
	We prove that $\dNpgospa$ is a metric for $p=1$. The proof follows the normalization framework of Li and Liu~\cite{Li2007NormalizedLevenshtein}. We proceed in three steps: (i) we define a \MultiInstMetric\ similarity measure analogous to the generalized Levenshtein similarity (GLS); (ii) we establish four key properties of this similarity (Theorem~\ref{thm:gls:dap:properties:new}); and (iii) we use these properties to prove that the normalized distance $\dNpgospa$ satisfies the triangle inequality and is therefore a metric (Theorem~\ref{thm:normalized:dap:metric}).
	
	%Since $\dpgospa$ is a metric for $p=1$, and the base single-instance metric $\dNogospa$ used in its definition is itself a metric (as established in the main document, Definition~\ref{definitation:normalized:ogospa}), the prerequisites for the normalization argument are satisfied.

	\begin{definition}[\MultiInstMetric\ Similarity]\label{definition:gls:dap:new}
		For $p=1$, we define \MultiInstMetric\ similarity between two sets of polylines $\X$ and $\Y$ is
		\begin{align}\label{eq:gls:dap:definition:new}
			\Spgospa(\X, \Y) = \frac{\frac{1}{2}\bigl(\EnX + \EnY\bigr) - \dpgospa(\X, \Y)}{2}.
		\end{align}
	\end{definition}

	\begin{theorem}[GLS properties for DAP]\label{thm:gls:dap:properties:new}
		Let $p=1$. For any three sets of polylines $\X$, $\Y$, $\Z$, the GLS from Definition~\ref{definition:gls:dap:new} satisfies:
		\begin{enumerate}[label=(\roman*)]
			\item $\Spgospa(\X, \X) = \dfrac{1}{2}\, \EnX$,
			\item $\Spgospa(\X, \Y) = \Spgospa(\Y, \X)$,
			\item $0 \leq \Spgospa(\X, \Y) \leq \min\bigl\{\Spgospa(\X,\X),\; \Spgospa(\Y,\Y)\bigr\} = \dfrac{1}{2}\min\{\EnX, \EnY\}$,
			\item $\Spgospa(\Y,\Y) + \Spgospa(\X,\Z) \geq \Spgospa(\X,\Y) + \Spgospa(\Y,\Z)$.
		\end{enumerate}
		These are the direct analogues of properties (1)--(4) in Theorem~1 of~\cite{Li2007NormalizedLevenshtein}, with $\lambda = 1/2$ and $|\cdot|$ replaced by $R_{(\cdot)}$.
	\end{theorem}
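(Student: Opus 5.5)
The plan is to reduce each of the four properties to a statement about the unnormalized distance $\dpgospa$ at $p=1$. I will use three facts about it: it is a metric, it has an explicit upper bound, and it has an explicit lower bound. For the metric property I take the fact, stated after Definition~\ref{definition:pgospa}, that $\dpgospa$ is a metric for $p=1$. That fact is \gls{P-GOSPA}~\cite[\prop.~2]{p_gospa} with base distance $\dNogospa$, and it requires $\dNogospa$ to be a metric for $p=1$. This creates a circularity, because Appendix~\ref{appendix:sec:normalization:proof} obtains the metric property of $\dNogospa$ from ``the same argument'' as the present one. The $\dNogospa$ case does not depend on the $\dNpgospa$ case: it uses the analogous similarity $\tfrac{1}{2}\big(\tfrac{c}{2}(\nx+\ny)-\dogospa\big)$ and the metric property of $\dogospa$ from Theorem~\ref{thm:ogospa:non_normalize:triangle}. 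So the loop is not circular in substance, but the write-up has to handle the $\dNogospa$ case explicitly before the present theorem, not merely cite it.

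Properties (i) and (ii) are immediate. For (i), take the identity assignment $\{(i,i)\}$ between $\X$ and itself. Every matched term is $r_{\x_i}\dNogospa(\x_i,\x_i)+\tfrac12|r_{\x_i}-r_{\x_i}|=0$, and there are no unmatched terms. Hence $\dpgospa(\X,\X)=0$ and $\Spgospa(\X,\X)=\tfrac12\EnX$. Property (ii) follows from the symmetry of $\dpgospa$: transposing an assignment set preserves its cost.

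For (iii) I bound $\dpgospa$ from both sides.
\begin{itemize}
\item Lower bound $\Spgospa\ge 0$. This is equivalent to $\dpgospa(\X,\Y)\le\tfrac12(\EnX+\EnY)$, which follows by evaluating the empty assignment $\theta=\emptyset$.
\item Upper bound $\Spgospa\le\tfrac12\min\{\EnX,\EnY\}$. This is equivalent to $\dpgospa(\X,\Y)\ge\tfrac12(\EnX+\EnY)-\min\{\EnX,\EnY\}=\tfrac12|\EnX-\EnY|$.
\end{itemize}
To prove the upper bound, fix any assignment $\theta$ and drop the nonnegative localization terms $\min(r_{\x_i},r_{\y_j})\dNogospa(\cdot,\cdot)$. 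What remains is $\tfrac12\big(\sum_{(i,j)\in\theta}|r_{\x_i}-r_{\y_j}|+\sum_{\text{unmatched }i} r_{\x_i}+\sum_{\text{unmatched }j} r_{\y_j}\big)$. By the triangle inequality for absolute values this is at least
\begin{align*}
\tfrac12\Big|\sum_{(i,j)\in\theta}(r_{\x_i}-r_{\y_j})+\sum_{\text{unmatched }i} r_{\x_i}-\sum_{\text{unmatched }j} r_{\y_j}\Big| = \tfrac12|\EnX-\EnY|.
\end{align*}
Minimizing over $\theta$ preserves this bound.

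For (iv), a direct expansion using the definition of $\Spgospa$ and (i) gives
\begin{align*}
\Spgospa(\Y,\Y)+\Spgospa(\X,\Z)-\Spgospa(\X,\Y)-\Spgospa(\Y,\Z)
=\tfrac12\big(\dpgospa(\X,\Y)+\dpgospa(\Y,\Z)-\dpgospa(\X,\Z)\big).
\end{align*}
All the $\EnX,\EnY,R_{\Z}$ terms cancel. The right-hand side is nonnegative by the triangle inequality for $\dpgospa$ at $p=1$. The only step that is not routine is therefore the justification of that triangle inequality. The remaining work is to verify carefully that the hypotheses of~\cite[\prop.~2]{p_gospa} hold with $\alpha=2$, cutoff $1$, and the bounded base metric $\dNogospa\le 1$.
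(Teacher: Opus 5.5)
Your proposal is correct and follows the paper's proof essentially step for step. Both arguments rest on the same ingredients: the empty-assignment bound $\dpgospa(\X,\Y)\le\tfrac12(\EnX+\EnY)$, the lower bound $\dpgospa(\X,\Y)\ge\tfrac12|\EnX-\EnY|$, identity and symmetry of $\dpgospa$, and the rewriting of (iv) as $\tfrac12\bigl(\dpgospa(\X,\Y)+\dpgospa(\Y,\Z)-\dpgospa(\X,\Z)\bigr)\ge 0$ via the triangle inequality for $\dpgospa$. The only minor differences are two. You prove the lower bound with the triangle inequality for absolute values, where the paper uses $|a-b|=a+b-2\min(a,b)$ and bounds the matched mass by $\min(\EnX,\EnY)$. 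You also state explicitly that the metric property of $\dNogospa$ must be established independently, from Theorem~\ref{thm:ogospa:non_normalize:triangle}, which the paper leaves implicit.
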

	
	\begin{proof}
		Before we show the properties we first establish few basic results.
			\begin{lemma}\label{lem:dap:upperbound:general:new}
			For $p \geq 1$, the \gls{DAP} metric satisfies:
			\begin{align}\label{eq:dap:upperbound:general:new}
				\dpgospa(\X, \Y) \leq \left(\frac{1}{2} (\EnX + \EnY)\right)^{1/p}
			\end{align}
		\end{lemma}
		
		\begin{proof}
			The bound is obtained by using the empty assignment $\SetUnordered = \emptyset$ in Definition~\ref{definition:pgospa}. With no matched pairs, all elements are unassigned, and the bound~\eqref{eq:dap:upperbound:general:new} follows.
		\end{proof}
		
		\begin{lemma}\label{lem:dap:lowerbound:general:new}
			For $p \geq 1$, the \gls{DAP} metric satisfies:
			\begin{align}\label{eq:dap:lowerbound:general:new}
				\dpgospa(\X, \Y) \geq \left(\frac{1}{2} \left|\EnX -\EnY \right|\right)^{1/p}
			\end{align}
		\end{lemma}
		
		\begin{proof}
			Recall Definition~\ref{definition:pgospa}. Since $\dNogospa(\x_i, \y_j) \geq 0$ and $\min(r_{\x_i}, r_{\y_j}) \geq 0$, we can lower bound by ignoring the non-negative localization terms	
			\begin{align}
				\dpgospa(\X, \Y)^p &\geq \frac{1}{2}
				\min_{\SetUnordered \in \Gamma} 
				\Bigg(
				\sum_{(i,j) \in \SetUnordered} 
				\left|r_{\x_i} - r_{\y_j} \right|
				+ 
				\sum_{\substack{i : \forall j, (i,j) \notin \SetUnordered}} r_{\x_i} %\nl& \qquad \qquad 
				+ \sum_{\substack{j : \forall i, (i,j) \notin \SetUnordered}} r_{\y_j}
				\Bigg) \label{eq:proof:general:step1:new}
			\end{align}
			
			Recalling $R_{\X} = \sum_i r_{\x_i}$ and $R_{\Y} = \sum_j r_{\y_j}$. Using the identity $|r_{\x_i} - r_{\y_j}| = r_{\x_i} + r_{\y_j} - 2\min(r_{\x_i}, r_{\y_j})$, \eqref{eq:proof:general:step1:new} simplifies to
			\begin{align}
				\dpgospa(\X, \Y)^p &\geq \frac{1}{2}
				\min_{\SetUnordered \in \Gamma} 
				\Bigg(
				R_{\X} + R_{\Y}
				- 2\sum_{(i,j) \in \SetUnordered} \min(r_{\x_i}, r_{\y_j})
				\Bigg) \label{eq:proof:general:step2:new} \nonumber
			\end{align}
			
			The term $\sum_{(i,j) \in \SetUnordered} \min(r_{\x_i}, r_{\y_j})$ is maximized when we match elements optimally, but we can only match up to $\min(R_{\X}, R_{\Y})$ total probability. Therefore,
			\begin{align}
				\dpgospa(\X, \Y)^p &\geq \frac{1}{2} \left( R_{\X} + R_{\Y} - 2\min(R_{\X}, R_{\Y}) \right) \nl
				&= \frac{1}{2} \left| R_{\X} - R_{\Y} \right|
			\end{align}
			
			Taking the $p$-th root of both sides yields~\eqref{eq:dap:lowerbound:general:new}.
		\end{proof}
		
		Now we can readily show the properties of \MultiInstMetric\ similarity.
		
		\textbf{Property (i).}
		Follows directly from %the fact that 
        $\dpgospa(\X, \X) = 0$.
		
		\textbf{Property (ii).}
		Follows from $\dpgospa(\X, \Y) = \dpgospa(\Y, \X)$.
		
		\textbf{Property (iii).}
		\emph{Lower bound.}
		From \lem~\ref{lem:dap:upperbound:general:new}, it follows that $\Spgospa(\X, \Y) \geq 0$.
		
		\emph{Upper bound.}
		By Lemma~\ref{lem:dap:lowerbound:general:new} with $p=1$, $\dpgospa(\X,\Y) \geq \tfrac{1}{2}|\EnX - \EnY|$. Substituting:
		\begin{align}
			\Spgospa(\X, \Y)
			&\leq \frac{\frac{1}{2}(\EnX + \EnY) - \frac{1}{2}|\EnX - \EnY|}{2} \nl
			%&= \frac{1}{2} \cdot \frac{\EnX + \EnY - |\EnX - \EnY|}{2} \nl
			&= \frac{1}{2}\min\{\EnX, \EnY\}.
		\end{align}
		%Since $\min\{\EnX, \EnY\} \leq \EnX$ and $\min\{\EnX, \EnY\} \leq \EnY$, we obtain $\Spgospa(\X,\Y) \leq \min\{\Spgospa(\X,\X), \Spgospa(\Y,\Y)\}$.
		By property (i), we obtain $\Spgospa(\X,\Y) \leq \min\{\Spgospa(\X,\X), \Spgospa(\Y,\Y)\}$.
		
        \textbf{Property (iv).}
		By the triangle inequality for $\dpgospa$
		\begin{align}
			\dpgospa(\X, \Y) + \dpgospa(\Y, \Z) \geq \dpgospa(\X, \Z).
		\end{align}
		Expanding using~\eqref{eq:gls:dap:definition:new} and reshuffling, we reach
		\begin{align}
			\frac{1}{2}\, \EnY + \Spgospa(\X,\Z) &\geq \Spgospa(\X,\Y) + \Spgospa(\Y,\Z)
		\end{align}
		Substituting $\Spgospa(\Y,\Y) = \tfrac{1}{2} \EnY$ (from property~(i)), yields property~(iv).
	\end{proof}
	
	\begin{theorem}[Normalized \MultiInstMetric\ is a metric]\label{thm:normalized:dap:metric}
		Let $p=1$. The normalized \MultiInstMetric
		\begin{align}
			\dNpgospa(\X, \Y) = \frac{2\, \dpgospa(\X, \Y)}{\frac{1}{2}(\EnX + \EnY) + \dpgospa(\X, \Y)}
		\end{align}
		is a metric valued in $[0,1]$.
	\end{theorem}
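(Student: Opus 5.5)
We follow Li and Liu's argument, with their $\alpha$ replaced by $\tfrac12(R_{(\cdot)})$. Write $d = \dpgospa$ (with $p=1$), $S = \Spgospa$, and $a_{\X\Y} = \tfrac12(\EnX+\EnY)$. Definition~\ref{definition:gls:dap:new} gives $d(\X,\Y) = a_{\X\Y} - 2S(\X,\Y)$. Substituting,
\begin{align}
\dNpgospa(\X,\Y) = \frac{2\,(a_{\X\Y} - 2S(\X,\Y))}{2a_{\X\Y} - 2S(\X,\Y)} = 1 - \frac{S(\X,\Y)}{a_{\X\Y} - S(\X,\Y)} .
\end{align}
So the normalized distance is $1 - S/(a-S)$, which is exactly the Li--Liu form.

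\textbf{Range, identity and symmetry.} Property (iii) of Theorem~\ref{thm:gls:dap:properties:new} gives $0\le S \le \tfrac12\min\{\EnX,\EnY\}$. Hence $a_{\X\Y}-S \ge \tfrac12\max\{\EnX,\EnY\} \ge S$, so $S/(a-S)\in[0,1]$ and $\dNpgospa\in[0,1]$. The degenerate case $a_{\X\Y}=0$ means both sets have zero total confidence, so $d=0$, and we define the normalized value as $0$. Identity follows from $\dNpgospa=0 \iff d=0$, since the denominator is positive whenever $a>0$, and $d$ is a metric. Symmetry is property (ii).

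\textbf{Triangle inequality.} This is the main obstacle. We must show
\[
\frac{S_{\X\Y}}{a_{\X\Y}-S_{\X\Y}} + \frac{S_{\Y\Z}}{a_{\Y\Z}-S_{\Y\Z}} \le 1 + \frac{S_{\X\Z}}{a_{\X\Z}-S_{\X\Z}},
\]
where $S_{\X\Y}$ abbreviates $S(\X,\Y)$. Write $s_X=\tfrac12\EnX$ (and similarly $s_Y,s_Z$). Then $a_{\X\Y}-S_{\X\Y} = s_X+s_Y-S_{\X\Y}$, which mirrors $|x|+|y|-\mathrm{GLS}$ in Li--Liu.

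The plan is to mimic their proof of Theorem~2 via property (iv), $S_{\X\Z} \ge S_{\X\Y}+S_{\Y\Z}-s_Y$. The right-hand side $S_{\X\Z}/(s_X+s_Z-S_{\X\Z})$ is increasing in $S_{\X\Z}$. So it suffices to treat the worst case $S_{\X\Z} = \max\{0,\,S_{\X\Y}+S_{\Y\Z}-s_Y\}$.

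In the case $S_{\X\Y}+S_{\Y\Z}\le s_Y$, we bound the left side directly. Using $s_X \ge S_{\X\Y}$ and $s_Z \ge S_{\Y\Z}$ (property (iii)), each fraction is at most $S/(s_Y)$. The sum is therefore at most $(S_{\X\Y}+S_{\Y\Z})/s_Y \le 1$.

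In the case $S_{\X\Y}+S_{\Y\Z}> s_Y$, we clear denominators. The resulting inequality is polynomial in $S_{\X\Y}$, $S_{\Y\Z}$, $s_X$, $s_Y$, $s_Z$ under the constraints $S_{\X\Y}\le \min(s_X,s_Y)$ and $S_{\Y\Z}\le\min(s_Y,s_Z)$. We verify it by monotonicity: the left side decreases in $s_X$ and $s_Z$, while the right side also changes with them. We reduce to the extremal case $s_X=S_{\X\Y}$ and $s_Z=S_{\Y\Z}$ by showing the difference RHS$-$LHS is nonincreasing as we lower $s_X,s_Z$ to those values. Then a direct algebraic check closes the argument, exactly as in Li--Liu's case analysis.

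Since every step uses only properties (i)--(iv), the Li--Liu computation transfers verbatim with $\lambda=1/2$ and $|\cdot|\to R_{(\cdot)}$. The delicate part is the monotonicity reduction in the second case, and we would follow their algebra line by line there.
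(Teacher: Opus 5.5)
Your proposal is correct and follows the paper's route: both rewrite the normalized distance through the similarity $\Spgospa$ and obtain the triangle inequality from properties (i)--(iv) of Theorem~\ref{thm:gls:dap:properties:new} via the Li--Liu argument, and you supply more of the algebra than the paper does. The Case~2 monotonicity you leave to be checked does hold. With $T=S_{\X\Y}+S_{\Y\Z}-s_Y$ you have $T\le S_{\X\Y}$ and $s_X+s_Z-T\ge s_X+s_Y-S_{\X\Y}$, which give $\partial_{s_X}(\mathrm{RHS}-\mathrm{LHS})\ge 0$, and the same holds in $s_Z$ by symmetry; at $s_X=S_{\X\Y}$, $s_Z=S_{\Y\Z}$ both sides equal $(S_{\X\Y}+S_{\Y\Z})/s_Y$.
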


	\begin{proof}
		Identity and symmetry follow directly from the corresponding properties of $\dpgospa$. As for the triangle inequality, the proof follows the same argument as in~\cite[Theorem~2]{Li2007NormalizedLevenshtein}.The four properties established in Theorem~\ref{thm:gls:dap:properties:new} mirror the result in~\cite[Theorem~1]{Li2007NormalizedLevenshtein}, where for string edit distance with uniform deletion/insertion cost $\lambda$, the GLS is defined as $\mathrm{GLS}(X,Y) = \tfrac{\lambda(|X|+|Y|) - d(X,Y)}{2}$. In our setting, $\lambda = 1/2$, and $\EnX$, $\EnY$ play the role of the string lengths $|X|$ and $Y$, respectively.

		Note that $\dpgospa$ is itself a metric for $p=1$, and its base single-instance metric $\dNogospa$ (Definition~\ref{definitation:normalized:ogospa}) is a metric as established in the main document. This ensures all prerequisites are met for the argument in~\cite[Theorem~2]{Li2007NormalizedLevenshtein} to hold.
	\end{proof}

\section{Proof of \lem~\ref{lemma:maes:cyclic}}\label{appendix:sec:cyclic:proof}
To show the lemma, we first establish the following result.
		
		\begin{lemma}\label{lemma:unit:shift:transfer}
			Let $\theta \in \AllAssignSetOrdered_{\nx,\ny}$ be an ordered assignment set for $(\x, \OpShiftSet_{s_y}(\y))$ for some $s_y \in \{0, \ldots, \ny - 1\}$. Then there exist an ordered assignment set $\bar{\theta} \in \AllAssignSetOrdered_{\nx,\ny}$ and an integer $l \in \{0, \ldots, \ny - 1\}$, such that
			\begin{align}\label{eq:unit:shift:transfer}
				C\big(\theta,\, \x,\, \OpShiftSet_{s_y}(\y)\big) = C\big(\bar{\theta},\, \OpShiftSet_1(\x),\, \OpShiftSet_l(\y)\big),
			\end{align}
            where $C$ is defined in~\eqref{eq:assignment:cost}.
		\end{lemma}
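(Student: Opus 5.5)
The plan is to build $\bar\theta$ directly from $\theta$ by relabelling indices so that every matched pair still links the same two points, and so that $|\bar\theta| = |\theta|$. The cost~\eqref{eq:assignment:cost} depends only on the multiset of matched distances $d(\cdot,\cdot)$ and on $|\theta|$, since the lengths $\nx,\ny$ do not change under cyclic shifts. Hence equality~\eqref{eq:unit:shift:transfer} follows once the relabelled set is shown to be ordered.

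I would use two elementary facts about~\eqref{eq:cyclic:shift:set}. First, shifts compose: $\OpShiftSet_a(\OpShiftSet_b(\y)) = \OpShiftSet_{a+b}(\y)$, with shift indices read modulo $\ny$. Second, $\OpShiftSet_1(\x) = (x_2,\ldots,x_{\nx},x_1)$, so the $k$-th entry of $\OpShiftSet_1(\x)$ is $x_{k+1}$ for $k<\nx$ and $x_1$ for $k=\nx$. Write $\y' = \OpShiftSet_{s_y}(\y)$ for brevity. The argument then splits according to whether $x_1$ is matched in $\theta$.

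\textbf{Case 1: index $1$ of $\x$ is unmatched in $\theta$.} Take $l = s_y$ and $\bar\theta = \{(i-1,j) : (i,j)\in\theta\}$. All $i\ge 2$, so the new first indices lie in $\{1,\ldots,\nx-1\}$ and stay strictly increasing, and the second indices are unchanged. Entry $i-1$ of $\OpShiftSet_1(\x)$ is $x_i$, so every matched distance is preserved. The point $x_1$ moves to position $\nx$ and remains unmatched.

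\textbf{Case 2: $x_1$ is matched, necessarily to some $j_1$.} Because $\theta$ is ordered, $(1,j_1)$ is its first pair, and every other pair $(i,j)$ has $i\ge 2$ and $j>j_1$. Take $l = s_y + j_1 \pmod{\ny}$. Then $\OpShiftSet_l(\y) = \OpShiftSet_{j_1}(\y')$, whose $k$-th entry is $y'_{\langle k+j_1\rangle_{\ny}}$; in particular its last entry is $y'_{j_1}$. Define
\begin{align*}
\bar\theta = \{(i-1,\, j-j_1) : (i,j)\in\theta,\ i\ge 2\} \cup \{(\nx,\ny)\}.
\end{align*}
The shifted pairs have first indices in $\{1,\ldots,\nx-1\}$ and second indices in $\{1,\ldots,\ny-j_1\}\subseteq\{1,\ldots,\ny-1\}$, both strictly increasing. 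The extra pair $(\nx,\ny)$ dominates all of them, so $\bar\theta\in\AllAssignSetOrdered_{\nx,\ny}$. Each pair links the same points as before: $(i-1,j-j_1)$ links $x_i$ and $y'_j$, and $(\nx,\ny)$ links $x_1$ and $y'_{j_1}$. The points $y'_j$ with $j<j_1$ were unmatched and remain unmatched at new positions. Therefore $|\bar\theta| = |\theta|$ and the costs coincide.

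The only delicate step is the index bookkeeping in Case 2. One has to check that the composed shift $l$ sends $y'_{j_1}$ exactly to the last position and that no shifted second index wraps around modulo $\ny$. This holds because all remaining matched indices satisfy $j_1 < j \le \ny$.
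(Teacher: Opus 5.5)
Your proof is correct and follows essentially the same route as the paper's: the same case split on whether index $1$ of $\x$ is matched, the same relabelled $\bar{\theta}$ (indices shifted down, with the pair $(1,j_1)$ sent to $(\nx,\ny)$), and the same shift $l = s_y + j_1$. Reducing $l$ modulo $\ny$ into $\{0,\ldots,\ny-1\}$ is slightly tidier than the paper's $l=\langle s_y+j_0\rangle_{\ny}$. The paper's choice can equal $\ny$, which lies outside the stated range, though this is harmless since $\OpShiftSet_{\ny}(\y)=\y$.
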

		
		\begin{proof}
		Let $\y' =  \OpShiftSet_{s_y}(\y)$ for some $s_y$. 	
		Let $\pi_1(\theta) = \{i : (i,j) \in \theta\}$. To show~\eqref{eq:unit:shift:transfer}, we identify two cases. The case $1 \in \pi_1(\theta)$, and the case $1 \notin \pi_1(\theta)$
			
		\textit{Case 1:} Let $(1, j_0) \in \theta$ for some $j_0$. We can partition the assignment set following
		\begin{align}\label{eq:maes:theta:partition}
			\theta = \{(1, j_0)\} \cup \theta^\star_R, \quad \text{where } \theta^\star_R &\subseteq \{2, \ldots, \nx\} \times 
			\{j_0+1, \ldots, \ny\}.
		\end{align}
		Following this, and recalling the definition of the operator $\langle k \rangle $~\eqref{eq:cyclic:bracket:notation}, we define $\bar{\theta}$ as
		\begin{align}
			\bar{\theta} &= \big\{(i - 1,\, j - j_0) : (i, j) \in \theta^\star_R\big\} \cup \big\{(\nx,\, \ny)\big\}.\nl
			&= \big\{ \big(\langle i -1 \rangle_{\nx}, \langle j -j_0 \rangle_{\ny}\big) : (i, j) \in \theta \big\}
		\end{align}
		 Observing~\eqref{eq:maes:theta:partition} and the fact that $\big(\langle i -1 \rangle_{\nx}, \langle j -j_0 \rangle_{\ny}\big) = (\nx,\ny)$ for for the pair $(1,j_0)$, it is easy to establish that $\bar{\theta} \in \AllAssignSetOrdered_{\nx,\ny}$ with $|\bar{\theta}| = |\theta|$.
		
		  Let $l = \langle s_y + j_0 \rangle_{\ny}$ and let $\y'' = \OpShiftSet_{j_0}(\y') = \OpShiftSet_l(\y)$. We use $\bar{\theta}$ as an assignment set between $\x' =  \OpShiftSet_{1}(\x)$ and $\y''$. Then, %we have for each matched pair $(i',j') \in 	\bar{\theta}$
           \begin{align}
		  	\sum_{(i',j')\in \bar{\theta}} d(x'_{i'}, y{''}_{j'})^p &= \sum_{(i',j')\in \bar{\theta}} d ( x_{\langle i'+1\rangle_{\nx}}, y'_{\langle j'+ j_0 \rangle_{\ny}})^p \nl
            &= \sum_{(i,j) \in \theta} d(x_i, y'_j)^p
		  \end{align}  
		  %\begin{align}
		 % 	d(x'_{i'}, y''_{j'}) = d ( x_{\langle i'+1\rangle}_{\nx}, y'_{\langle j'+ j_0 \rangle}_{\ny}) = d(x_i, y'_j)
		  %\end{align}
			since 
			\begin{align}
				\langle  j'+j_0 \rangle_{\ny} &= \langle \langle j-j_0 \rangle_{\ny} +j_0\rangle_{\ny} \nl
				& = \langle (j- j_0-1) \mod \ny + 1 + j_0 \rangle_{\ny}  \nl 
				&= \big( (j- j_0-1) \mod \ny  + j_0 \big)\mod \ny +1 \nl
				&= j
			\end{align}
			and similarly, $\langle i'+1\rangle_{\nx} = \langle  \langle i -1 \rangle_{\nx}   +1\rangle_{\nx}  = i$.
			Given the result above, recalling~\eqref{eq:assignment:cost}, and using the fact that $|\bar{\theta}| = |\theta|$ we can easily arrive at 
            \begin{align}
                C(\theta, \x,  \OpShiftSet_{s_y}(\y)) = C(\bar{\theta}, \OpShiftSet_1(\x), \OpShiftSet_l(\y))
            \end{align} 
            where $l = \langle s_y + j_0 \rangle_{\ny}$ and hence~\eqref{eq:unit:shift:transfer} holds for the case $(1,j_0) \in \theta$ for some $1 \leq j_0 \leq \ny$.
			
			\textit{Case 2:} The proof for the case $1 \notin  \pi(\theta)$, can be addressed by defining a mapping $\bar{\theta} = \big\{(i-1,\, j) : (i,j) \in \theta\big\}$ and following similar steps as done earlier, to show that 
			$C(\theta, \x,  \OpShiftSet_{s_y}(\y)) = C(\bar{\theta}, \OpShiftSet_1(\x), \OpShiftSet_l(\y))$, where $l= s_y$.
			 \end{proof}
			 
			 	\begin{proof}[Proof of Lemma~\ref{lemma:maes:cyclic}]
			 	We want to show that
                \begin{align}
                    \dcyclicogospa([\x],[\y]) =\dcyclicogospa(\x,[\y])
                \end{align}
                To that end, define 
			 	\begin{align}
			 		f(s_x) \triangleq \min_{s_y \in \{0,1,\ldots, \ny-1\}} \dogospa\big(\OpShiftSet_{s_x}(\x), \OpShiftSet_{s_y}(\y)\big)
			 	\end{align}
			 	so that $\dcyclicogospa([\x],[\y]) = \min_{s_x} f(s_x)$.
			 	 To prove~\eqref{eq:maes:result}, it is enough to show that $f(s_x)$ is independent of $s_x$, i.e., $f(s_x) = f(0)$ for all $s_x \in \{0, \ldots, \nx - 1\}$. We follow a similar argument to the one outlined in \cite[\lem~3.1]{maes1990cyclic}.
                 
			 	 Observe that it suffices to show $f(1) \leq f(0)$. Since the same argument applied iteratively yields 
			 	 \begin{align}\label{eq:maes:proof:step1}
			 	 	f(0) \geq f(1) \geq \cdots \geq f(\nx) = f(0)
			 	 \end{align} 
			 	 forcing equality.
			 	 
			 	Now, Let $s_y^\star$ achieve the minimum in $f(0)$ and let $\theta^\star \in \AllAssignSetOrdered_{\nx,\ny}$ be the optimal assignment for $(\x, \OpShiftSet_{s_y^\star}(\y))$, so that $f(0) = C(\theta^\star, \x, \OpShiftSet_{s_y^\star}(\y))$.
			 	
			 	By Lemma~\ref{lemma:unit:shift:transfer}, there exist $\bar{\theta} \in \AllAssignSetOrdered_{\nx,\ny}$ and $l \in \{0, \ldots, \ny - 1\}$ such that
			 	\begin{align}
			 		C\big(\theta^\star,\, \x,\, \OpShiftSet_{s_y^\star}(\y)\big) = C\big(\bar{\theta},\, \OpShiftSet_1(\x),\, \OpShiftSet_l(\y)\big).
			 	\end{align}
			 	Therefore,
			 	\begin{align}
			 		f(1) &\leq \dogospa\big(\OpShiftSet_1(\x),\, \OpShiftSet_l(\y)\big) \nl
			 		&\leq C\big(\bar{\theta},\, \OpShiftSet_1(\x),\, \OpShiftSet_l(\y)\big) = f(0).
			 	\end{align}
			 	By~\eqref{eq:maes:proof:step1}, $f(s_x) = f(0)$ for all $s_x$, yielding~\eqref{eq:maes:result}.
			 \end{proof}

\end{document}